\newcommand{\stitle}[1]{\vspace{1ex} \noindent{\bf #1}}
\begin{document}
\title{PSNE: Efficient Spectral Sparsification Algorithms for Scaling Network Embedding} 
	\author{Longlong Lin}
\affiliation{%
	\institution{College of Computer and Information Science, Southwest University}
 \city{Chongqing}
 \country{China}
}
\email{longlonglin@swu.edu.cn}

	\author{Yunfeng Yu}
	\affiliation{%
		\institution{College of Computer and Information Science, Southwest University}
   \city{Chongqing}
 \country{China}
	}
	\email{YunfengYu817@outlook.com}

	\author{Zihao Wang}
\affiliation{%
	\institution{College of Computer and Information Science, Southwest University}
  \city{Chongqing}
 \country{China}
}
\email{zihaowang@outlook.com}

	\author{Zeli Wang}
\affiliation{%
	\institution{Chongqing University of Posts and Telecommunications}
  \city{Chongqing}
 \country{China}
}
\email{zlwang@cqupt.edu.cn}

	\author{Yuying Zhao}
\affiliation{%
	\institution{Vanderbilt University}
  \city{ Nashville}
 \country{USA}
}
\email{yuying.zhao@vanderbilt.edu}
	\author{Jin Zhao}
\affiliation{%
	\institution{Huazhong University of Science and Technology}
  \city{Wuhan}
 \country{China}
}
\email{zjin@hust.edu.cn}

	\author{Tao Jia}\authornote{Corresponding author}
\affiliation{%
	\institution{College of Computer and Information Science, Southwest University}
  \city{Chongqing}
 \country{China}
}
\email{tjia@swu.edu.cn}
\renewcommand{\shortauthors}{Longlong Lin et al.}
\begin{abstract}
Network embedding has numerous practical applications and has received extensive attention in graph learning, which aims at mapping vertices into a low-dimensional and continuous dense vector space by preserving the underlying structural properties of the graph. Many network embedding methods have been proposed, among which factorization of the Personalized PageRank (PPR for short)  matrix has been empirically and theoretically well supported recently.  However, several fundamental issues cannot be addressed. (1) Existing methods invoke a seminal Local Push subroutine to approximate \textit{a single} row or column of the PPR matrix. Thus, they have to execute $n$ ($n$ is the number of nodes) Local Push subroutines to obtain a provable PPR matrix, resulting in prohibitively high computational costs for large $n$. (2) The PPR matrix has limited power in capturing the structural similarity between vertices, leading to performance degradation.  To overcome these dilemmas, we propose PSNE, an efficient spectral s\textbf{P}arsification method for  \textbf{S}caling \textbf{N}etwork \textbf{E}mbedding, which can fast obtain the embedding vectors that retain strong structural similarities. Specifically, PSNE first designs a matrix polynomial sparser to accelerate the calculation of the PPR matrix, which has a theoretical guarantee in terms of the Frobenius norm. Subsequently, PSNE proposes a simple but effective multiple-perspective strategy to enhance further the representation power of the obtained approximate PPR matrix. Finally, PSNE applies a randomized singular value decomposition algorithm on the sparse and multiple-perspective PPR matrix to get the target embedding vectors. Experimental evaluation of real-world and synthetic datasets shows that our solutions are indeed more efficient, effective, and scalable compared with ten competitors.
\end{abstract}

\begin{CCSXML}
<ccs2012>
   <concept>
       <concept_id>10010147.10010257.10010293.10010319</concept_id>
       <concept_desc>Computing methodologies~Learning latent representations</concept_desc>
       <concept_significance>300</concept_significance>
       </concept>
 </ccs2012>
\end{CCSXML}

\ccsdesc[300]{Computing methodologies~Learning latent representations}

\keywords{Network Embedding; Spectral Graph Theory}
\maketitle
\section{Introduction} 
Graphs are ubiquitous for modeling real-world complex systems, including financial networks, biological networks, social networks, etc. Analyzing and understanding the semantic information behind these graphs is a fundamental problem in graph analysis \cite{DBLP:series/ads/AggarwalW10,DBLP:conf/aaai/LinLJ23,DBLP:conf/icdcs/YuanLK0019,DBLP:journals/eswa/HeLYLJW24,DBLP:journals/corr/abs-2406-07357}. Thus, numerous graph analysis tasks are arising, such as node classification, link prediction, and graph clustering. Due to their exceptional performance, network embedding is recognized as an effective tool for solving these tasks. Specifically, given an input graph $G$ with $n$ nodes, network embedding methods aim at mapping any node $v \in G$ to a  low-dimensional and  continuous dense vector space $x_v \in \mathbb{R}^{k}$ ($k$ is the dimension size and $k<<n$) such that the embedding vectors can unfold the underlying structural properties of graphs.  Thus, the obtained embedding vectors can be effectively applied to these downstream tasks mentioned above \cite{DBLP:series/synthesis/2020Hamilton,DBLP:conf/mir/YuLLWOJ24}.

Many network embedding methods have been proposed in the literature, among which matrix factorization has been empirically and theoretically shown to be superior to  Skip-Gram based random walk methods and deep learning based methods \cite{netmf,DBLP:journals/pvldb/YangSXYB20,sketchNE}, as stated in Section \ref{sec:existing}. Specifically, matrix factorization based solutions first construct a proximity matrix $S$ according to their corresponding applications, in which $S(i, j)$ represents the relative importance of node $j$ with respect to (w.r.t.) node $i$. Then,  the traditional singular value decomposition algorithm is executed on $S$ or some variants of $S$ to obtain the target embedding vectors. Thus,  different proximity matrices were designed and the Personalized PageRank (PPR) matrix emerges as a superior choice due to its good performance \cite{DBLP:journals/pvldb/YangSXYB20,yin2019scalable,DBLP:conf/kdd/ZhangX0H21}. In particular, given two nodes $i, j \in G$, the PPR value $\Pi(i, j)$ is the probability that a random walk starts from $i$ and stops at $j$ using $k$ steps state transition, in which $k$ follows the geometric distribution. Therefore, the PPR values can be regarded as the concise summary of an infinite number of random walks, possessing nice structural properties and strong interpretability for network embedding.

Despite their success, most existing methods suffer from the following several fundamental issues:  (1) \textbf{They either have high
costs to achieve provable performance, or hard to obtain empirical satisfactory embedding vectors for downstream tasks.} Specifically, the state-of-the-art (SOTA) methods, such as STRAP \cite{yin2019scalable} and Lemane \cite{DBLP:conf/kdd/ZhangX0H21}, applied the seminal Local Push subroutine \cite{DBLP:conf/focs/AndersenCL06} to approximate \textit{a single} row or column of the PPR matrix, resulting in that have high overheads, especially for massive graphs. For example, Ref. \cite{DBLP:journals/pacmmod/LiaoLDCQW23a} reported that the current fastest Local Push algorithm takes about 2 seconds on a million-node YouTube graph. Thus,  the Local Push algorithm takes over 23 days to compute the PPR matrix, which is infeasible even for a powerful computing cluster.  For reducing computational costs, NRP \cite{DBLP:journals/pvldb/YangSXYB20} integrated the calculation and factorization of the PPR matrix in an iterative framework.  However, NRP has unsatisfactory theoretical approximation errors and poor empirical performance because it loses the nonlinear capability, as stated in our experiments. (2) Existing methods are highly dependent on the assumption that the PPR matrix can reflect the structural similarity between two nodes \cite{Page1999ThePC}. Namely, larger PPR values, denoted as $\Pi(i, j)$, correspond to the higher structural similarity between nodes $i$ and $j$. \textbf{However, the original PPR matrix has limited power in capturing the
structural similarity between vertices, leading to performance degradation.}  Take Figure \ref {fig:holder2} as an example, we can  see that $\Pi(v_1,v_3)(=0.054)$ is almost three times as many as $\Pi(v_1,v_7)(=0.140)$. Thus, $v_1$ is more similar to $v_7$  than $v_3$ according to the PPR metric. However, $v_1$ and $v_7$ have no common neighbors, and their walking trajectories are not similar. On the contrary,  $v_1$ and $v_3$ share their only neighbor $v_2$ and have almost the same walking trajectories, which is a key property used to characterize whether nodes are in the same cluster\cite{tra1,tra2}. Specifically, let $\mathcal{P}(u)$ be  all walking trajectories starting  with  $u$, we have $\mathcal{P}(v_1)=\{\{v_1,p(v_2)\}|p(v_2) \in \mathcal{P}(v_2)\}$ and  $\mathcal{P}(v_3)=\{\{v_3,p(v_2)\}|p(v_2) \in \mathcal{P}(v_2)\}$. Thus, $\mathcal{P}(v_1)$ and $\mathcal{P}(v_3)$ are identical except for the starting vertex. As a result, by the PPR metric, the classifier tends to mistakenly assign  $v_1$ and $v_7$ to the same class even though $v_1$ and $v_3$ belong to the same community and have stronger structural similarities.  So, exploring alternative methods for overcoming these limitations remains a huge challenge.

\begin{figure}[t]
    \centering
\includegraphics[width=0.42\textwidth]{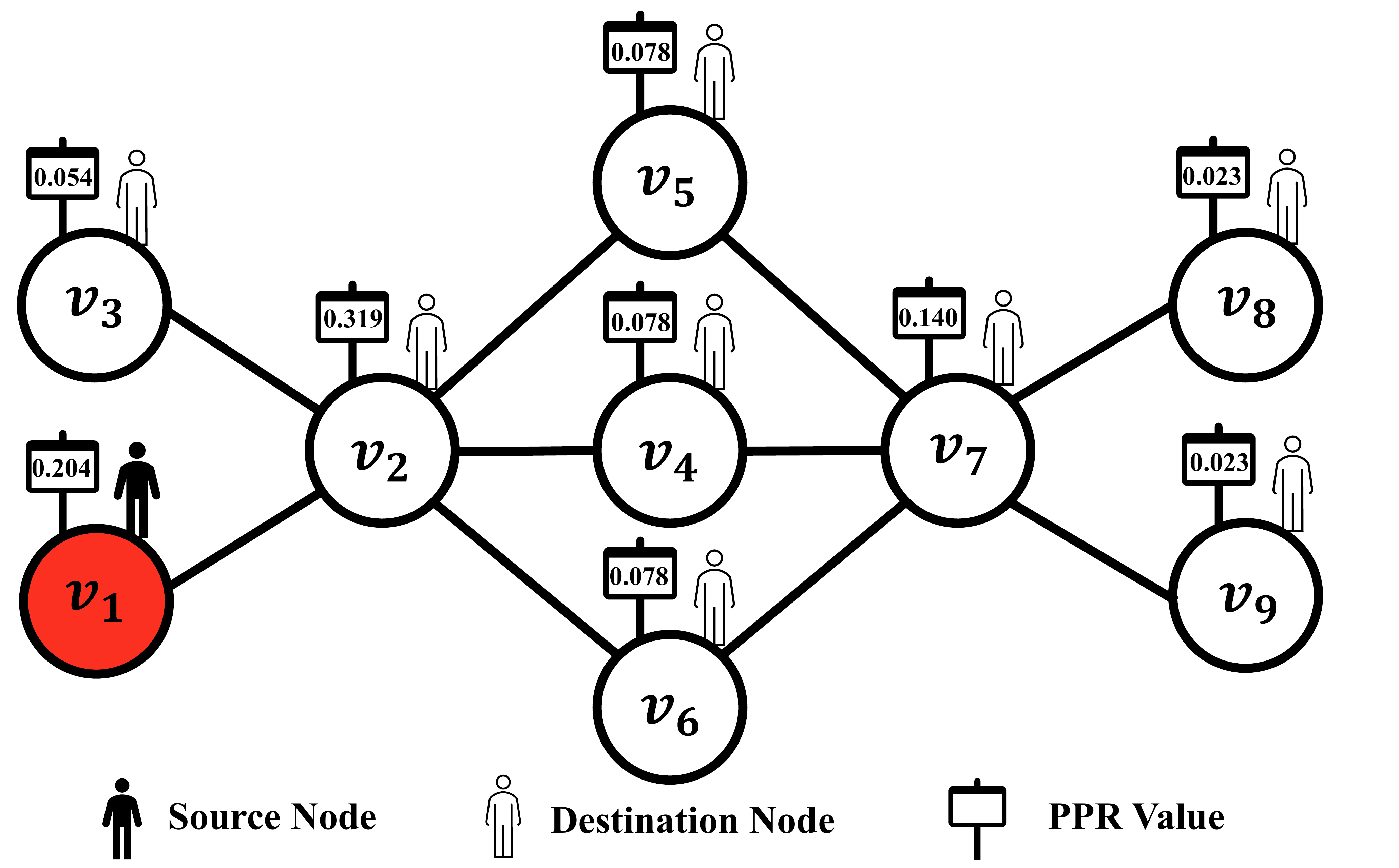}
    \caption{$v_1$ is the source node and $\alpha=0.15$ is the decay parameter in PPR. The PPR values $\Pi(v_1, v_3)$ and $\Pi(v_1, v_7)$ are 0.054 and 0.140, respectively. The proposed multiple-perspective PPR values $M(v_1, v_3)$ and $M(v_1, v_7)$ are 0.142 and 0.095, respectively (see Table \ref{tab:structure-aware} for details).}\vspace{-0.5cm}
    \label{fig:holder2} 
\end{figure} 

To this end, we propose a novel and efficient spectral s\textbf{P}arsification method for \textbf{S}caling \textbf{N}etwork \textbf{E}mbedding (PSNE). Specifically,  PSNE first non-trivially utilizes the theories of spectral sparsification and random-walk matrix polynomials \cite{2011DBLP:journals/siamcomp/SpielmanS11,DBLP:spectraljournals/corr/ChengCLPT15} to \emph{directly} construct a sparse PPR matrix with a theoretical guarantee in terms of the Frobenius norm (Theorem \ref{thm:spectral}), which avoids repeatedly computing each row or column of the PPR matrix, reducing greatly the computational overheads. Then, a simple yet effective multiple-perspective strategy (Section \ref{subsection:mp}) is proposed to further enhance the representation power of the approximate PPR matrix, which can alleviate the inherent defects of the original PPR metric. Finally, PSNE employs a randomized singular value decomposition algorithm to efficiently factorize the sparse and multiple-perspective PPR matrix and obtain high-quality target embedding vectors. In a nutshell, we highlight our contributions as follows.

 \begin{itemize} [leftmargin=8pt, topsep=0pt]
    \item We are the first to adopt the spectral sparsification theory to \emph{directly} approximate the whole PPR matrix, circumventing the expensive costs for a single row or column of the PPR matrix in existing push-based methods. 
     \item We devise a simple but effective multiple-perspective strategy to further enhance the representation power of the approximate PPR matrix.  A striking feature of the strategy is that it also can be generalized to improve the qualities of SOTA baselines.
     \item Empirical results  on real-world and synthetic  datasets show that our proposed PSNE outperforms the quality by at least 2\% than ten competitors in most cases. Besides, PSNE is also more efficient than existing PPR-based methods without sacrificing accuracy, showing a better trade-off between efficiency and accuracy. 
 \end{itemize}

\section{Related Work} \label{sec:existing}
\subsection{Random Walk Based Network Embedding}
Random walk based methods are inspired by the Skip-Gram model \cite{DBLP:journals/corr/abs-1301-3781}. The high-level idea is to obtain embedding vectors by keeping the co-occurrence probability of the vertices on the random walks. The main difference between these methods is how to generate positive samples by different random walk strategies. For example, \textit{DeepWalk} 
 \cite{perozzi2014deepwalk} utilized truncated random walks. \textit{Line}  \cite{tang2015line} and \textit{Node2vec} \cite{grover2016node2vec}  extended \textit{DeepWalk} with more complicated higher-order random walks or  DFS/BFS search schemes. \textit{APP} \cite{zhou2017scalable}  and \textit{VERSE} \cite{tsitsulin2018verse} adopted the $\alpha$-discounted random walks to obtain positive samples. However, a shared challenge of these methods is high computational costs due to the training of the Skip-Gram model.

\subsection{Deep Learning Based Network Embedding} 
Deep learning provides an alternative solution to generate embedding vectors. For example, \textit{SDNE} \cite{wang2016structural} utilized multi-layer auto-encoders to generate embedding vectors. \textit{DNGR} \cite{cao2016deep} combined random walk and deep auto-encoder for network embedding. \textit{PRUNE} \cite{DBLP:conf/nips/LaiHCYL17} applied the Siamese Neural Network to retain both the point-wise mutual information and the PageRank distribution. \textit{GraphGAN} \cite{DBLP:journals/corr/abs-1711-08267} and \textit{DWNS} \cite{DBLP:conf/www/DaiSZLW19} employed generative adversarial networks \cite{8253599} to capture the probability of node connectivity in a precise manner. \textit{AW} \cite{DBLP:conf/nips/Abu-El-HaijaPAA18} proposed an attention model that operates on the power series of the transition matrix. Note that 
although the  Graph Neural Network  (GNN) with feature information  \cite{kipf2016semi} has achieved great success in many tasks, network embedding, which only uses the graph topology like our paper, is still irreplaceable. Specifically, (1) obtaining the rich node feature information is very expensive and even is not always available for downstream tasks, resulting in limited applications \cite{DBLP:journals/pacmmod/Du00H23}. (2) Existing GNNs are typically end-to-end and need different training processes for different
downstream tasks, leading to inflexibility. On the contrary, by focusing on the graph topology, network embedding provides a structure feature for each node, which is independent of downstream tasks \cite{DBLP:conf/iclr/Tang0Y22}. Thus, network embedding provides a trade-off between the accuracy of downstream tasks and the training cost. 
In short, the main bottlenecks of these deep learning methods are high computational costs and labeling costs, which fail to deal with massive graphs.

 \subsection{Matrix Based  Network Embedding} 
Other popular methods are to factorize a pre-defined proximity matrix that reflects the structural properties of the graph. For example, \textit{GraRep}\cite{grarep} performs SVD on the $k$-th order transition matrix. \textit{NetMF} \cite{netmf} demonstrated the equivalence between random walk based methods and matrix factorization based methods. \textit{NetSMF} \cite{netsmf} combined \textit{NetMF} with  sparsification techniques to further improve efficiency of \textit{NetMF}. However,
\textit{NetSMF} cannot effectively capture the non-uniform higher-order topological information because \textit{NetSMF} inherits the defects of \textit{DeepWalk}, resulting in poor practical performance in most cases, as stated in our empirical results.  \textit{ProNE} \cite{zhang2019prone} utilized matrix factorization and spectral propagation to obtain embedding vectors. \textit{STRAP} \cite{yin2019scalable} adopted the PPR matrix as the proximity matrix for improving the performances of \textit{NetMF} and \textit{NetSMF}. However, STRAP applied the seminal Local Push subroutine \cite{DBLP:conf/focs/AndersenCL06} to approximate a single row or column of the PPR matrix, resulting in prohibitively high time\&space overheads.  \textit{HOPE} \cite{ou2016asymmetric}, \textit{AROPE} \cite{zhang2018arbitrary}, \textit{NRP} \cite{DBLP:journals/pvldb/YangSXYB20}, FREDE\cite{DBLP:FREDE}, and SketchNE\cite{sketchNE} derived embedding vectors by implicitly computing the proximity matrix. Thus, they abandoned nonlinear operations on proximity matrices, which limits their representation powers. \textit{Lemane} \cite{DBLP:conf/kdd/ZhangX0H21}  considered the decay factor $\alpha$ in PPR should not be fixed but learnable, resulting in more flexibility. However, this learning process brings  high overheads for \textit{Lemane}.

%\vspace{-0.3cm}
\section{Preliminaries} \label{sec:pro}
We use $G(V, E)$ to denote an undirected graph, in which $V$ and $E$ are the vertex set and the edge set of $G$, respectively. Let $n=|V|$ (resp., $m=|E|$) be the number of vertices (resp., edges). $\boldsymbol{A}$ is the adjacency matrix with $A_{ij}$ as the element of $i$-th row and $j$-th column of $\boldsymbol{A}$, $\boldsymbol{D}=diag(d_1, ..., d_n)$ is the degree matrix with $d_i = {\sum}_j A_{ij}$, $\boldsymbol{L}=\boldsymbol{D}-\boldsymbol{A}$ be the Laplacian matrix, $\boldsymbol{P}=\boldsymbol{D}^{-1}\boldsymbol{A}$ be the state  transition matrix.
\begin{figure*}[t]
    \centering
    \includegraphics[width=0.95\textwidth]{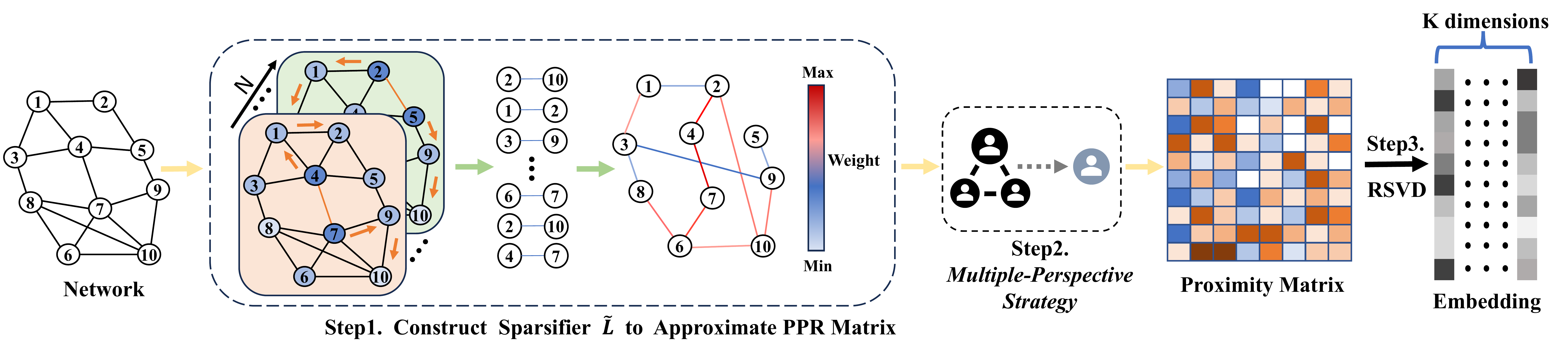}
    \caption{The design of PSNE framework. In Step 1 (i.e., Section \ref{subsec:sppr}), PSNE first constructs the sparsifier $\widetilde{L}$ by sampling $N$ paths and assigning weights to the newly sampled edges. Then,  Equations \ref{fomula4}-\ref{eq:9} and $\widetilde{L}$ are applied to  \emph{directly} approximate the PPR matrix, avoiding repeatedly computing each row or column of the PPR matrix in the traditional Local Push method. In Step 2 (i.e., Section \ref{subsection:mp}), PSNE devises a multiple-perspective strategy to further enhance the representation of the coarse-grained and sparse PPR matrix obtained by Step 1. In Step 3 (i.e., Section \ref{subsec:algorithm}), a randomized singular value decomposition (RSVD) algorithm is executed on the sparse and multiple-perspective PPR proximity matrix to obtain the target embedding matrix. 
}
    \label{fig:holder1}
\end{figure*}

\noindent\textbf{Personalized PageRank (PPR)} is the state-of-the-art proximity metric, which can measure the relative importance of nodes \cite{DBLP:conf/focs/AndersenCL06,DBLP:journals/pvldb/LinYLZQJJ24}. The PPR value ${\Pi}(u,v)$  is the probability that an $\alpha$-decay random walk from $u$ stops at node $v$, in which an $\alpha$-decay random walk has $\alpha$ probability to stop at the current node, or ($1-\alpha$) probability to randomly jump to one of its neighbors. Thus, the length of $\alpha$-decay random walk follows the geometric distribution with success probability $\alpha$. The PPR matrix  $\Pi$ are formulated  as follows:
\begin{equation}\label{fomula1}
\boldsymbol{\Pi}=\sum_{r=0}^{\infty} \alpha(1-\alpha)^{r} \cdot \mathbf{P}^{r}
\end{equation}
\noindent\textbf{Problem Statement.} Given an undirected  graph $G(V,E)$, the network embedding problem aims to obtain a mapping function $f: V \xrightarrow{} \mathbb{R}^{k}$, in which $k$ is a positive integer representing the embedding dimension size and $k<<n$. An effective network embedding function $f$ unfolds the underlying structural properties of graphs.

\section{PSNE: Our Proposed Solution} \label{sec:our}
      Here, we introduce a novel and efficient spectral s\textbf{P}arsification algorithm PSNE for \textbf{S}caling \textbf{N}etwork \textbf{E}mbedding. PSNE first applies non-trivially the spectral graph theories to sparse the PPR matrix with theoretical guarantees. Subsequently, PSNE devises multiple-perspective strategies to further enhance the representation power of the sparse PPR matrix. Finally, a random singular value decomposition algorithm is executed on the refined sparse PPR matrix to obtain target embedding. Figure \ref{fig:holder1} is the framework of PSNE.

\subsection{Spectral Sparsification for PPR Matrix}\label{subsec:sppr}

\begin{definition} [Random-Walk Matrix Polynomials] \label{def:do}
For an undirected graph $G$ and a non-negative vector $\boldsymbol{\beta}=(\beta_1,...,\beta_T)$ with $\sum_{i=1}^T \beta_i=1$, the matrix
\begin{equation}\label{fomula2}
\mathbf{L}_{\boldsymbol{\beta}}(G)=\mathbf{D}-\sum_{r=1}^{T} \beta_{r} \mathbf{D}  \left(\mathbf{D}^{-1} \mathbf{A}\right)^{r}
\end{equation}
is a $T$-degree random-walk matrix polynomial of $G$.
\end{definition}

\begin{theorem} \label{thm:sparsifier} [\textbf{Sparsifiers of Random-Walk Matrix Polynomials}] 
\label{theo:1}
For any undirected graph $G$ and $0 <\epsilon\leq 0.5$, there exists a matrix $\widetilde{L}$ with $O(n \log n/{{\epsilon}^2})$ non-zeros entries such that for any $x\in\mathbb{R}^{n}$, we have
\begin{equation}\label{fomula3}
(1-\epsilon) \boldsymbol{x}^{\top}\widetilde{\mathbf{L}} \boldsymbol{x} \leq \!\boldsymbol{x}^{\top} L_{\boldsymbol{\beta}}(G) \boldsymbol{x}\leq(1+\epsilon)   \boldsymbol{x}^{\top}\widetilde{\mathbf{L}} \boldsymbol{x}
\end{equation}
\end{theorem}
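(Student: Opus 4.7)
The plan is to follow the path-sampling framework of Cheng--Cheng--Liu--Peng--Teng for sparsifying random-walk matrix polynomials, combined with the Spielman--Srivastava effective-resistance sampling paradigm. First I would verify that $\mathbf{L}_{\boldsymbol{\beta}}(G)$ is itself the Laplacian of a (dense) weighted graph $G_{\boldsymbol{\beta}}$: since $\mathbf{D}(\mathbf{D}^{-1}\mathbf{A})^{r}$ is the matrix whose $(u,v)$ entry equals $d_u$ times the $r$-step random-walk transition probability from $u$ to $v$, this matrix is symmetric with nonnegative entries and row sums $d_u$, so subtracting it from $\mathbf{D}$ produces a valid Laplacian. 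Hence $\mathbf{L}_{\boldsymbol{\beta}}(G) = \sum_{r=1}^{T}\beta_r \mathbf{L}_{G^{(r)}}$ for appropriate weighted graphs $G^{(r)}$ whose edges correspond to length-$r$ walks in $G$.

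Next I would decompose $\mathbf{L}_{\boldsymbol{\beta}}(G)$ into an explicit sum of rank-one edge Laplacians indexed by walks. For every length-$r$ walk $p = (u_0, u_1, \ldots, u_r)$ in $G$, define the contribution
\begin{equation*}
w(p)\bigl(\mathbf{e}_{u_0} - \mathbf{e}_{u_r}\bigr)\bigl(\mathbf{e}_{u_0} - \mathbf{e}_{u_r}\bigr)^{\!\top},
\qquad w(p) = \beta_r \,\frac{A_{u_0 u_1}}{d_{u_0}}\cdot A_{u_0 u_1}^{-1}\prod_{i=1}^{r-1}\frac{A_{u_i u_{i+1}}}{\,d_{u_i}},
\end{equation*}
normalized so that summing over all walks reproduces $\mathbf{L}_{\boldsymbol{\beta}}(G)$ exactly. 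This rewrites the target matrix as a sum over a huge but structured collection of rank-one positive semidefinite terms, which is the form demanded by effective-resistance sampling.

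I would then apply the Spielman--Srivastava sparsifier to this sum-of-rank-ones representation. The sampling procedure picks $N = O(n\log n/\epsilon^{2})$ walks: choose $r \in \{1,\dots,T\}$ with probability $\beta_r$, pick a uniformly random edge $(a,b)$ as the ``anchor'', and complete $a$--$b$ into a length-$r$ walk by taking the remaining steps as ordinary random walks in $G$; each sampled walk is then added to $\widetilde{\mathbf{L}}$ with a reweighting factor proportional to the inverse of its sampling probability. The correctness of the $(1\pm\epsilon)$ guarantee follows from a matrix Chernoff/Bernstein bound once one controls the leverage score (effective resistance times weight) of each sampled path-edge.

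The main obstacle is precisely this leverage-score control: one must show that the effective resistance between the endpoints $u_0$ and $u_r$ of a walk in $G$ is at most the sum $\sum_{i=0}^{r-1} R_{\mathrm{eff}}(u_i,u_{i+1})$, which is Rayleigh's monotonicity/triangle inequality for effective resistance, and then combine this with Foster's theorem ($\sum_{(u,v)\in E} A_{uv}\,R_{\mathrm{eff}}(u,v) = n-1$) to conclude that the \emph{total} leverage mass of all path-edges is $O(n)$. Once this bound is in hand, the standard matrix-concentration argument yields a sparsifier with $O(n\log n/\epsilon^{2})$ nonzeros satisfying inequality \eqref{fomula3}, completing the proof.
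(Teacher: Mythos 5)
The paper never actually proves this statement: Theorem~\ref{theo:1} is imported verbatim from the cited work of Cheng et al.\ on sparsifying random-walk matrix polynomials, and the surrounding text only recalls their \emph{two-stage} construction --- a path-sampling stage producing an initial sparsifier with $O(Tm\log n/\epsilon^{2})$ nonzeros, followed by a standard spectral-sparsification pass that reduces this to $O(n\log n/\epsilon^{2})$. Your skeleton for the first stage is the right one: $\mathbf{D}(\mathbf{D}^{-1}\mathbf{A})^{r}$ is indeed symmetric and nonnegative with row sums $d_u$, so $\mathbf{L}_{\boldsymbol{\beta}}(G)$ decomposes into rank-one walk Laplacians that can be sampled with inverse-probability reweighting (though your displayed $w(p)$ is off by a factor of $A_{u_0u_1}/d_{u_0}$; the weight that actually reproduces $\mathbf{L}_{\boldsymbol{\beta}}(G)$ is $\beta_r\prod_{i=1}^{r}A_{u_{i-1}u_i}\big/\prod_{i=1}^{r-1}d_{u_i}$).

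The genuine gap is in the leverage-score accounting at the end. With a uniformly chosen anchor edge, a walk $p$ is sampled with probability proportional to $w(p)\sum_{i=1}^{r}1/A_{u_{i-1}u_i}$, i.e.\ the series bound built from single-edge resistances; summing this over all length-$r$ walks gives $\Theta(rm)$, not $O(n)$, which is precisely why the first stage alone only yields $O(Tm\log n/\epsilon^{2})$ nonzeros. Foster's theorem does not close this gap: even if you replace $1/A_{uv}$ by the true step resistances $R_{\mathrm{eff}}^{G}(u,v)$, the per-position marginal computation gives total mass $\sum_{i=1}^{r}\sum_{(u,v)}2A_{uv}R_{\mathrm{eff}}^{G}(u,v)=2r(n-1)=O(Tn)$, still not $O(n)$; and bounding the leverage of a walk-edge with respect to $\mathbf{L}_{\boldsymbol{\beta}}(G)$ by resistances measured in $G$ presumes $\mathbf{L}_{\boldsymbol{\beta}}(G)\succeq c\,\mathbf{L}$, which fails for general $\boldsymbol{\beta}$ (take $\beta_2=1$ on a connected bipartite graph, where $G^{(2)}$ is disconnected). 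To reach the claimed $O(n\log n/\epsilon^{2})$ you must either run the second, standard sparsification pass on the intermediate $O(Tm\log n/\epsilon^{2})$-edge graph --- which is what the cited construction does --- or sample anchors proportionally to genuine leverage scores, which forfeits the efficiency that motivates path sampling in the first place. As written, your argument stops one stage short of the stated sparsity bound.
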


The matrix $\widetilde{L}$ satisfying Equation \ref{fomula3} is called spectrally similar with approximation parameter ${\epsilon}$ to $L_{\boldsymbol{\beta}}(G)$, which can be constructed by the two-stage computing framework \cite{DBLP:spectraljournals/corr/ChengCLPT15}. In the first stage, an initial sparsifier with $O(Tm \log n /\epsilon^{2})$ non-zero entries is found. In the second stage,  a standard spectral sparsification algorithm \cite{2011DBLP:journals/siamcomp/SpielmanS11} is applied in the initial sparsifier to further reduce the number of non-zero entries to $O(n\log n/{{\epsilon}^2})$.  Note that the second stage requires complex graph theory to understand and consumes most of the time of the two-stage computing framework. Thus, in this paper, we first non-trivially utilize the first stage to obtain the coarse-grained sparsifier quickly. Then, we propose an effective multiple-perspective strategy to enhance the representation power of the coarse-grained sparsifier in the next subsection. Specifically, the $T$-truncated PPR matrix is given as follows:
\begin{equation}\label{fomula4}
\boldsymbol{\Pi}^{\prime}=\boldsymbol{\Pi}-\sum_{r=T+1}^{+\infty} \alpha(1-\alpha)^{r}   \mathbf{P}^{r}=\sum_{r=0}^{T} \alpha(1-\alpha)^{r}   \mathbf{P}^{r}
\end{equation}
where $T$ is the truncation order ($T$ is also $T$ in Definition \ref{def:do}). By combining Equation \ref{fomula2} and Equation \ref{fomula4}, we have 
\begin{align}
   \boldsymbol{\Pi}^{\prime}&={\alpha}\boldsymbol{I}+\sum_{r=1}^{T} \alpha(1-\alpha)^{r}   \mathbf{P}^{r}\\
   &={\alpha}\boldsymbol{I}+\boldsymbol{D^{-1}}\cdot \sum_{r=1}^{T} \alpha(1-\alpha)^{r}   \boldsymbol{D}\mathbf{P}^{r}\\ 
      &={\alpha}\boldsymbol{I}+{\alpha}_{sum}\boldsymbol{D^{-1}}\cdot \sum_{r=1}^{T} \alpha(1-\alpha)^{r}/{\alpha}_{sum}  \boldsymbol{D}\mathbf{P}^{r}\\ 
   &={\alpha}\boldsymbol{I}+{\alpha}_{sum}\boldsymbol{D^{-1}}\cdot (\boldsymbol{D}-\mathbf{L}_{\boldsymbol{\beta}}(G))\\ 
      &={\alpha}\boldsymbol{I}+{\alpha}_{sum}\cdot (\boldsymbol{I}-\boldsymbol{D^{-1}}\mathbf{L}_{\boldsymbol{\beta}}(G)) \label{eq:9}
\end{align}

Where ${\alpha}_{sum}=\sum_{i=1}^{T}\alpha(1-\alpha)^{i}$. Therefore, we establish a theoretical connection between the (truncated) PPR matrix and random-walk matrix polynomials. Based on this connection, we devise a novel sparsifier to obtain the approximate PPR matrix (Algorithm \ref{alg:sparsifier}), reducing greatly the prohibitively high computational cost of existing local push-based embedding methods. Specifically, Algorithm \ref{alg:sparsifier} first initializes an undirected graph  $\tilde{\mathbf{G}}=(V, \emptyset)$, in which $V$ is the vertex set of the input graph $G$ (Line 1). Subsequently, Lines 2-7 of Algorithm \ref{alg:sparsifier} adds $O(N)$ edges to $\tilde{\mathbf{G}}$ by executing iteratively the Path\_Sampling Function (Lines 11-18). Finally,  Algorithm \ref{alg:sparsifier}  applies Equations \ref{fomula4}-\ref{eq:9} to get an approximate PPR matrix $\boldsymbol{\tilde{\Pi}}$ with $O(N)$ non-zeros entries (Lines 8-10).

\stitle{\textbf{Remark.}} The path length $r$ in Line 4 is selected with the probability $\alpha(1-\alpha)^{r}/\sum_{i=1}^{T}\alpha(1-\alpha)^{i}$ for satisfying the condition of Definition \ref{def:do}, that is $\alpha_r$ = $\alpha(1-\alpha)^{r}/\sum_{i=1}^{T}\alpha(1-\alpha)^{i}$ and $\sum_{r=1}^{T}\alpha_r=1$. As a result, Algorithm \ref{alg:sparsifier}  can obtain a sparse PPR matrix with a theoretical guarantee in terms of the Frobenius norm, which will be analyzed theoretically later. Besides, Algorithm \ref{alg:sparsifier} also leverages the intuition that closer nodes exhibit a higher propensity for information exchange. Therefore, the shorter the random walk, the greater the probability of being selected to promote local interactions, enabling more effective capture non-uniform high-order structural proximities among vertices for obtaining high-quality embedding vectors, which is verified in our experiments.

\begin{algorithm}[t] %\footnotesize
\caption{\textit{Sparsifier of the PPR Matrix}}
	\label{alg:sparsifier}
 	\begin{algorithmic}[noline]
	\STATE \textbf{Input}: 	An undirected graph $G(V, E)$; the truncation order $T$; the number of non-zeros $N$ in the sparsifier; the decay factor $\alpha$ of PPR \\
 \textbf{Output}: A sparse PPR matrix $\boldsymbol{\tilde{\Pi}}$
	\end{algorithmic}
\begin{algorithmic}[1] %[1] enables line numbers
            \STATE Initializing an undirected   graph $\tilde{\mathbf{G}}=(V, \emptyset)$
		\FOR{$i=1$ to $N$}
            \STATE Uniformly pick an edge $e=(u,v) \in E$ 
            \STATE Pick an integer $r\in[1,T]$ with a probability of $\alpha(1-\alpha)^{r}/\sum_{i=1}^{T}\alpha(1-\alpha)^{i}$ 
            \STATE $u',v',Z_p$$\leftarrow$ Path\_Sampling$(e,r)$
            \STATE Add weight $2rm/(NZ_p)$ to the edge ($u',v'$) of $\tilde{\mathbf{G}}$
            \ENDFOR
            \STATE $\tilde{\mathbf{L}} \leftarrow$  the  Laplacian matrix of $\tilde{\mathbf{G}}$
            \STATE  $\boldsymbol{\tilde{\Pi}}\leftarrow {\alpha}\boldsymbol{I}+{\alpha}_{sum}(\boldsymbol{I}-\boldsymbol{D^{-1}}\tilde{L})$
            \RETURN $\boldsymbol{\tilde{\Pi}}$\\
\STATE \stitle{Function} Path\_Sampling$(e=(u,v),r):$
            \STATE \quad  Uniformly pick an integer $j\in[1,r]$\\
		  \STATE  \quad Perform ($j-1$)-step random walk from  $u$ to $n_0$ 
            \STATE  \quad Record anonymous trajectory from  $u$ to $n_0$ ($AnoTra_u$)
            \STATE  \quad Perform ($r-j$)-step random walk from  $v$ to $n_r$
            \STATE  \quad Record anonymous trajectory from  $v$ to $n_r$ ($AnoTra_v$)
            \STATE  \quad Calculate pattern similarity via $AnoTra_u$ and $AnoTra_v$ for the multiple-perspective strategy of Section \ref{subsection:mp}
        \STATE     \quad \textbf{return} $n_0$, $n_r$, $\sum_{i=1}^{r} \frac{2}{\textbf{A}{(n_{i-1},n_i)}}$
            \end{algorithmic}
\end{algorithm}

\begin{table*}[t!]
\caption{Illustration of the original PPR and Multiple-Perspective PPR
(MP-PPR) values on Figure  \ref{fig:holder2}.}
\centering
\scalebox{1}{
\begin{tabular}{|c|c|c|c|c|c|c|c|c|c|}
\hline
 &$v_1$ & $v_2$ & $v_3$ & $v_4$ & $v_5$ & $v_6$ & $v_7$ & $v_8$ & $v_9$ \\ \hline
PPR    &0.204    & 0.319  & 0.054  & 0.078  & 0.078  & 0.078  & 0.140  & 0.023  & 0.023  \\ \hline

MP-PPR &0.142 & 0.180  & 0.142  & 0.145  & 0.145  & 0.145  & 0.095  & 0.062  & 0.062  \\ \hline
\end{tabular}
}
\label{tab:structure-aware}
\end{table*}

\subsection{From PPR to Multiple-Perspective PPR}  \label{subsection:mp}
\iffalse Recall that existing methods directly utilize singular value decomposition (SVD) over the approximate PPR matrix to obtain embedding vectors \cite{yin2019scalable,DBLP:journals/pvldb/YangSXYB20,DBLP:conf/kdd/ZhangX0H21}.\fi 

As depicted in Figure \ref{fig:holder2}, the original PPR metric cannot effectively capture the structural similarity between vertices. Besides, if we directly take the approximate PPR matrix obtained by Section \ref{subsec:sppr} as the input to matrix factorization, there will also be performance degradation. To overcome these issues, we propose a simple yet effective multiple-perspective strategy to achieve the following goals: (1) Alleviating the inherent defects of the original PPR metric; (2) Enhancing the representation power of the coarse-grained and sparse PPR matrix obtained in Section \ref{subsec:sppr}.

 Through deep observation as shown in Figure \ref{fig:holder2}, 
we found that the original PPR measures node pair proximity from a \textit{single perspective} and ignores \textit{pattern similarity}. For example, in a social network with users $A$, $B$, and $C$. $C$ is the close friend of $A$, and $A$'s assessment of $B$ is not only influenced by $A$'s subjective impression of $B$ but also indirectly influenced by $C$'s impression of $B$. However, the original PPR metric only considers $A$'s impression of B.
On top of that, the original PPR fails to illustrate that node $v_1$ exhibits a structural pattern more similar to nodes $v_3$ rather than $v_7$.

Based on these intuitions, we propose a novel metric of Multiple-Perspective PPR (MP-PPR)  to effectively compute the structural proximity between destination nodes and the source node with pattern similarity. Since considering the perspective of all nodes is computationally expensive with little performance improvement, we integrate only the perspectives of the one-hop neighbors of the source node. Consequently, the multiple-perspective proximity of destination node $j$ w.r.t. the source node $i$ is stated as follows.
\begin{equation}\label{fomula6}
M(i,j)_{S} = \sum_{h\in \mathcal{N}(i)} \lambda_{hi}S_{hj}+ \lambda_{ii}S_{ij}
\end{equation}
where $S$ is any proximity matrix (e.g., the PPR matrix) and $\mathcal{N}(i)$ represents the neighbor set of  $i$.  $\lambda_{hi}$ (resp., $\lambda_{ii}$) is the multiple-perspective coefficient of node $h$ (resp., $i$) to node $i$. In this paper, we take $\lambda_{hi}=wp_{hi}/(\sqrt{(d_{h}+1)}\sqrt{(d_{i}+1)})$ and $\lambda_{ii}=1/(d_{i}+1)$ where $wp_{hi}$ is pattern similarity between $(v_h,v_j)$.
% \textbf{Anonymous Walk.} 
To characterize the pattern similarity between nodes, we introduce the well-known anonymous walk \cite{ivanov2018anonymous} as follows.
\begin{definition} [Anonymous Walk] \label{Anonymous}
If $A = (v_1,v_2,...,v_n)$ is a random walk trajectory, then its corresponding anonymous walk is the sequence of integers ${AnoTra}_{A} = (f(v_1),f(v_2),..,f(v_n))$, where $f(.)$ is a mapping that maps nodes to positive integers.
\end{definition}
Different nodes on the same trajectories are mapped to different positive integers, which may coincide on different paths. For example, trajectories $P_A = (v_1, v_2, v_3, v_2, v_3)$ and $P_B = (v_3,v_4,v_2,v_4, v_2)$ share the common anonymous trajectory $(1, 2, 3, 2, 3)$.
We utilize the anonymous walk for pattern similarity calculation, which demands extensive trajectory sampling, limiting scalability.
However, we have identified specific features in Equation \ref{fomula6} and Algorithm \ref{alg:sparsifier} as follows. (1) Equation \ref{fomula6} focuses on pattern similarity of neighboring nodes within one hop, reducing initial sampling needs. (2) Algorithm \ref{alg:sparsifier} already includes extensive path sampling, allowing for reduced sampling in anonymous random walks through strategic design (Lines 14 and 16 of Algorithm\ref{alg:sparsifier}).
 
After obtaining the anonymous trajectories, we utilize the Longest Common Subsequence \cite{lccs} to ascertain the similarity between the two trajectories.
\begin{theorem} \label{thm:sparsifier} [ \textbf{Longest Common Subsequence(LCSS)}] 
\label{theo:LCSS}
For two trajectories $P_A= (a_1,a_2,...,a_n)$ and $P_B= (b_1,b_2,...,b_m)$ with lengths n and m respectively, where the length of the longest common subsequence is:
\begin{equation}\label{lcss} \scriptsize
\operatorname{LCSS}(P_A, P_B)=\left\{\begin{array}{ll}
0  \text { if } P_A=\varnothing \text { or } P_B=\varnothing \\
1+\operatorname{LCSS}\left(a_{t-1}, b_{i-1}\right),   \text { if } a_{t}=b_{i} \\
\max \left(\operatorname{LCSS}\left(a_{t-1}, b_{i}\right), \operatorname{LCSS}\left(a_{t}, b_{i-1}\right)\right),   \text { otherwise }
\end{array}\right.
\end{equation}
where $t=1,2,...,n$ and $i=1,2,...,m$ and $\varnothing$ is empty trajectory.
\end{theorem}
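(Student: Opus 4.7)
The plan is to prove the recurrence by strong induction on $n+m$, where $\operatorname{LCSS}(P_A,P_B)$ is defined as the length of a longest subsequence appearing in both $P_A$ and $P_B$ (preserving order but not necessarily consecutive). Throughout, I interpret $a_t$ (resp.\ $b_i$) in the recurrence as the prefix $(a_1,\dots,a_t)$ of $P_A$ (resp.\ $(b_1,\dots,b_i)$ of $P_B$), so the recurrence really reads as a relation between $\operatorname{LCSS}(P_A,P_B)$ and $\operatorname{LCSS}$ on strictly shorter prefixes, which is what drives the induction.

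First I would handle the base case: if either sequence is empty (so $n+m = 0$ or one of $n,m$ is zero), the only common subsequence is the empty one, so $\operatorname{LCSS}=0$, matching the first branch. For the inductive step, fix $n,m \ge 1$ and assume the identity holds for all smaller prefix-length sums. I then split on whether $a_n = b_m$.

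In the matching case $a_n = b_m$, I would prove both inequalities. For ``$\ge$'', take any longest common subsequence $C$ of $(a_1,\dots,a_{n-1})$ and $(b_1,\dots,b_{m-1})$ and append the common symbol $a_n=b_m$; this gives a common subsequence of $P_A$ and $P_B$ of length $\operatorname{LCSS}(P_A[1{:}n{-}1],P_B[1{:}m{-}1])+1$. For ``$\le$'', the key exchange argument: given any optimal common subsequence $C^\ast$ of $P_A$ and $P_B$, I show we may assume without loss of generality that the last element of $C^\ast$ is matched to position $n$ in $P_A$ and position $m$ in $P_B$. Indeed, if the last element $c$ of $C^\ast$ is matched to positions $p\le n$ in $P_A$ and $q \le m$ in $P_B$, then $c = a_p = b_q$; if either $p<n$ or $q<m$, we can \emph{either} append the symbol $a_n=b_m$ to get a strictly longer common subsequence (contradicting optimality if the appended symbol actually extends $C^\ast$), or we re-match this trailing $c$ to positions $n$ and $m$ when $c = a_n = b_m$, leaving length unchanged. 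The surviving prefix of $C^\ast$ is then a common subsequence of the $(n-1)$-prefix and the $(m-1)$-prefix, so $\operatorname{LCSS}(P_A,P_B)\le \operatorname{LCSS}(P_A[1{:}n{-}1],P_B[1{:}m{-}1])+1$. Combining the two inequalities and invoking the inductive hypothesis gives the stated recurrence.

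In the mismatch case $a_n \neq b_m$, any common subsequence of $P_A$ and $P_B$ cannot use both $a_n$ and $b_m$ as its final symbol (they differ), so either $a_n$ is unused, making it a common subsequence of $P_A[1{:}n{-}1]$ and $P_B$, or $b_m$ is unused, making it a common subsequence of $P_A$ and $P_B[1{:}m{-}1]$. Taking an optimal such $C^\ast$ yields $\operatorname{LCSS}(P_A,P_B)\le \max\bigl(\operatorname{LCSS}(P_A[1{:}n{-}1],P_B),\operatorname{LCSS}(P_A,P_B[1{:}m{-}1])\bigr)$. The reverse inequality is immediate since any common subsequence of either sub-pair is also a common subsequence of $(P_A,P_B)$. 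The inductive hypothesis then yields the second and third branches of the recurrence. I expect the main obstacle to be the exchange argument in the matching case, specifically making rigorous the claim that some optimal $C^\ast$ may be taken to end at positions $(n,m)$; everything else is straightforward case analysis plus the induction hypothesis.
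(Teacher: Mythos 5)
Your proof is correct: the strong induction on $n+m$, the exchange argument in the matching case, and the observation that a common subsequence cannot have its last element matched simultaneously to position $n$ of $P_A$ and position $m$ of $P_B$ when $a_n \neq b_m$ together constitute the standard, complete proof of the LCS recurrence. You were also right to resolve the notational ambiguity by reading $a_t$ and $b_i$ in the recurrence as the prefixes $(a_1,\dots,a_t)$ and $(b_1,\dots,b_i)$; that is the only sensible interpretation of the statement. The main thing to know is that there is nothing in the paper to compare against: despite being labeled a theorem, this statement is imported from the cited reference on the longest common subsequence and is never proved anywhere in the paper (the appendix proves only the Frobenius-norm error bounds of Theorems 4 and 5). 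In effect the ``theorem'' functions as a definition of the dynamic-programming recurrence used to compute the pattern similarity $wp_{ij}$ between anonymous walk trajectories, so your write-up supplies a proof where the paper supplies none. One small point worth tightening: in the matching case your ``either append \dots or re-match'' sentence conflates the two sub-cases. The clean dichotomy is that if the optimal $C^\ast$ uses neither position $n$ of $P_A$ nor position $m$ of $P_B$ in its embedding, then appending $a_n=b_m$ contradicts optimality; otherwise its last symbol already equals $a_n=b_m$ and can be re-matched to the pair of positions $(n,m)$ without changing its length. With that phrasing fixed, the argument is airtight.
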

Therefore, the pattern similarity $w_{Pattern}$ in Equation \ref{fomula6} is defined via anonymous walk paths and LCSS as follows:
\begin{equation}\label{wpattern}
wp_{ij} = \frac{1}{s}\sum^s_1 LCSS({AnoTra}_i,{AnoTra}_j)/ {lenth(AnoTra_i)}
\end{equation}
where $AnoTra_i$ and $AnoTra_j$ are anonymous random walk trajectories starting from nodes $i$ and $j$, respectively. $lenth(.)$ is a function of trajectory length and $s$ is the sampling numbers node pair $(i,j)$.

\iffalse
To this end, a naive approach is to divide the PPR value of each node by its degree. However, this solution leads to the PPR value of the source node becoming excessively large (e.g., $v_1=0.204>v_2=0.064$ in table \ref{tab:structure-aware}). This is completely different from the original PPR values ($v_1=0.204<v_2=0.319$), thereby compromising the inherent capability of PPR to measure the importance of different nodes. Through deep observation, we find that the original PPR metric measures the proximity between the destination nodes and source node from the \textit{single perspective}  of the source node. This perspective is not comprehensive enough. For instance, in a social network with users $A$, $B$, and $C$. $C$ are close friends of $A$, and $A$'s assessment of $B$ is not only influenced by $A$'s subjective impression of $B$ but also indirectly influenced by $C$'s impression of $B$. However, the original PPR metric only considers $A$'s impression of B \cite{yin2019scalable,DBLP:journals/pvldb/YangSXYB20,DBLP:conf/kdd/ZhangX0H21}.  \fi
\iffalse Based on this intuition, we propose a novel metric of Multiple-Perspective PPR (MP-PPR)  to effectively compute the structural proximity between destination nodes and the source node.  Figure \ref{fig:holder3} shows the difference between PPR and MP-PPR. \fi

As shown in Table \ref{tab:structure-aware}, $v_3$ exhibits a higher MP-PPR value than $v_7$.  
Thus, MP-PPR captures more reasonable proximity for network embedding from multiple perspectives without compromising the proximity between different nodes as reflected in the original PPR.

\iffalse
\begin{equation}\label{fomula6}
M(i,j) = \sum_{h\in \mathcal{N}(i)}\frac{\tilde{\Pi}_{hj}}{\sqrt{(d_{h}+1)}\sqrt{(d_{i}+1)}}+ \frac{\tilde{\Pi}_{ij}}{(d_{i}+1)}
\end{equation}
\fi

\subsection{Our PSNE and Theoretical Analysis} \label{subsec:algorithm}
\begin{algorithm}[t] %\footnotesize
	\caption{\textit{PSNE}}
	\label{alg:final}
 	\begin{algorithmic}[noline]
	\STATE \textbf{Input}: An undirected graph $G(V,E)$; the truncation order $T$; the number of non-zeros $N$ used in the PPR matrix sparsifier;  the decay factor $\alpha$ of PPR; the filter parameter $\mu$; the embedding dimension size $k$
	\STATE \textbf{Output}: The network embedding matrix
	\end{algorithmic}
	\begin{algorithmic}[1] %[1] enables line numbers
            \STATE  Obtaining a sparse PPR matrix $\boldsymbol{\tilde{\Pi}}$ by executing Algorithm \ref{alg:sparsifier}
            \STATE  Obtaining the multiple-perspective PPR $\boldsymbol{M}_{\boldsymbol{\tilde{\Pi}}}$ by Equation \ref{fomula6} 
                        \STATE $M_{\tilde{\Pi}}\leftarrow \sigma_{\mu}(M_{\tilde{\Pi}})$  \quad // $\sigma_{\mu}$ is a non-linear activation function
            \STATE  $U,\Sigma,V \leftarrow$ Randomized SVD $(\tilde{M}_{\tilde{\Pi}}, k)$
            \RETURN $U\sqrt{\Sigma}$  as the network embedding matrix
	\end{algorithmic}
\end{algorithm}

Based on the above theoretical backgrounds, we devise an efficient spectral sparsification algorithm for scaling network embedding (Algorithm \ref{alg:final}). Firstly, Algorithm \ref{alg:final} obtains a sparse PPR matrix with a theoretical guarantee in terms of the Frobenius norm (Line 1). Subsequently, it obtains the multiple-perspective PPR matrix $\boldsymbol{M}_{\boldsymbol{\tilde{\Pi}}}$ (Line 2). 
Finally, Lines 3-5 obtain the network embedding matrix by executing the randomized singular value decomposition (RSVD) algorithm \cite{halko2010finding}.  Here, $\sigma_{\mu}$ is a non-linear activation function (e.g., $\sigma_{\mu} (x)=max(0,\log(xn\mu))$ \cite{yin2019scalable,DBLP:conf/kdd/ZhangX0H21,sketchNE}) with the filter parameter $\mu$. Next, we analyze the time\&space complexities of the proposed \textit{PSNE} and the corresponding approximation errors. 
\begin{theorem} \label{thm:complexity}
     The time complexity and space complexity of Algorithm \ref{alg:final} are $O(m \log n +mk+nk^2)$ and $O(m\log n+nk)$, respectively.
\end{theorem}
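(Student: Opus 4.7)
The plan is to bound the time and space by walking through the three blocks of Algorithm \ref{alg:final} (lines 1, 2, and 3--5) separately and summing. I will identify the dominant cost of each block and show it matches one of the terms $m\log n$, $mk$, or $nk^2$ in the claimed complexity.

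First I would analyze Algorithm \ref{alg:sparsifier}, called in Line 1. The outer loop runs $N$ times, and each iteration performs a uniform edge pick, a single draw of $r\in[1,T]$, and a call to \textsc{Path\_Sampling} of cost $O(r)=O(T)$ (for the two random walks, the anonymous trajectory recording, and the final $O(r)$-length summation $\sum_{i=1}^{r} 2/A(n_{i-1},n_i)$). Thus the sampling stage costs $O(NT)$ time and $O(N)$ space for the added edges. With the parameters chosen so that $NT=O(m\log n)$ (following the random-walk matrix polynomial sparsifier construction of \cite{DBLP:spectraljournals/corr/ChengCLPT15}), the output $\tilde{\mathbf L}$ has $O(m\log n)$ non-zeros and the conversion to $\tilde{\boldsymbol\Pi}$ via Equations \ref{fomula4}--\ref{eq:9} is just a linear scan over those non-zeros, giving $O(m\log n)$ time and $O(m\log n)$ space.

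Next I would treat the multiple-perspective step (Line 2). For each non-zero entry $\tilde\Pi_{hj}$ and each neighbor $i$ of $h$, Equation \ref{fomula6} adds one contribution to $M(i,j)$. Because only one-hop neighbors of the source are aggregated, and the pattern-similarity weights $wp_{hi}$ are obtained for free from the anonymous trajectories already stored by Algorithm \ref{alg:sparsifier}, the total work and storage are again proportional to the number of non-zeros of $\tilde{\boldsymbol\Pi}$, i.e.\ $O(m\log n)$. The element-wise non-linearity $\sigma_\mu$ of Line 3 is likewise linear in the sparsity and is absorbed into this term.

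Finally I would invoke the standard cost model for randomized SVD \cite{halko2010finding}: on a sparse $n\times n$ matrix with $\mathrm{nnz}=O(m\log n)$ non-zeros and target rank $k$, RSVD performs $O(k)$ sparse matvecs and a constant number of dense $n\times k$ orthogonalizations, yielding time $O(mk+nk^2)$ (after amortizing the log factor into the $mk$ term under the assumed regime) and working space $O(nk)$ for the sketch and output. Summing the three blocks gives total time $O(m\log n+mk+nk^2)$ and space $O(m\log n+nk)$, matching the statement. The main obstacle I anticipate is the bookkeeping for the RSVD term, specifically justifying that the effective non-zero count seen by the matvec phase can be written as $O(m)$ in the $mk$ term rather than $O(m\log n)\cdot k$; I would address this by showing that after $\sigma_\mu$ filters small entries the matrix passed to RSVD has $O(m)$ retained non-zeros per row on average, so that the $\log n$ factor is absorbed into the separate $m\log n$ construction term.
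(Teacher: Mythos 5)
Your block decomposition matches the paper's, but the accounting does not, and the discrepancy is a genuine gap rather than a stylistic difference. The paper takes the number of sampled edges to be $N=O(m)$ (following the cited sparsifier constructions and its own parameter choice $N=cTm$ with constant $c,T$), so the sparse matrix $\tilde{\boldsymbol{\Pi}}$ handed to the RSVD has $O(m)$ non-zeros and the RSVD cost is $O(Nk+nk^2+k^3)=O(mk+nk^2)$ with no log factor to explain away. You instead give the sparsifier $O(m\log n)$ non-zeros (via ``$NT=O(m\log n)$''), which forces the matvec phase of RSVD to cost $O(mk\log n)$ --- strictly larger than the claimed $O(mk)$ term. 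You notice this and propose to repair it by arguing that $\sigma_\mu$ prunes the matrix back down to $O(m)$ non-zeros, but nothing in the algorithm guarantees this: $\sigma_\mu(x)=\max(0,\log(xn\mu))$ zeroes entries below $1/(n\mu)$, and there is no bound on how many entries of $M_{\tilde{\Pi}}$ survive that threshold. That step would fail as written; the correct fix is simply to use the paper's parameter regime $N=O(m)$ from the start.

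Your analysis of the multiple-perspective step is also miscounted, and this is precisely where the paper's $m\log n$ term actually comes from. Computing $M(i,j)=\sum_{h\in\mathcal{N}(i)}\lambda_{hi}S_{hj}+\lambda_{ii}S_{ij}$ touches, for each edge $(i,h)$, every non-zero in row $h$ of $\tilde{\boldsymbol{\Pi}}$; the total work is $\sum_h d_h\cdot\mathrm{nnz}(\tilde{\Pi}_{h,:})\approx m\cdot(N/n)$, not $\mathrm{nnz}(\tilde{\boldsymbol{\Pi}})$ as you claim --- you are off by a factor of the average degree. The paper bounds this as $O(mN/n)=O(N\log n)=O(m\log n)$ under the explicit (and worth stating) assumption $m=O(n\log n)$ for real-world graphs. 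So in the paper's proof the $\log n$ lives in the aggregation step, not in the sparsifier or the SVD; in yours it is attributed to the sparsifier, which is what creates the unresolvable $O(mk\log n)$ obstruction downstream. A minor further difference: the paper bounds the per-path cost by the expected walk length $\mathbb{E}(r)\le 1/\alpha$ rather than the worst case $O(T)$, giving $O(N/\alpha)$ for the sampling stage, though either suffices once $T$ and $\alpha$ are treated as constants.
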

\begin{proof}
PSNE (i.e., Algorithm \ref{alg:final}) has three main steps as follows:
\begin{itemize} [leftmargin=8pt, topsep=0pt]
    \item Step 1 (i.e., Algorithm 1\iffalse \ref{alg:sparsifier}\fi): Random-Walk Molynomial Sparsifier for PPR matrix. Lines 1-7 of Algorithm 1 sample $O(N)$ paths to construct $O(N)$ edges for the sparse graph $\tilde{G}$. The expected value of $r$ ($r$ is the length of sample path), denoted as $\mathbb{E}(r)$, is given by $\sum_{i=1}^{T}i(\alpha(1-\alpha)^{i}/\sum_{i=1}^{T}\alpha(1-\alpha)^{i})  \leq 1/\alpha$. As a result, Lines 1-7 of Algorithm 1 consume $O(N{\mathbb{E}}(r))=O(N/\alpha)$ time. In Lines 8-9, Algorithm 1 consumes $O(N)$ time to compute the spare PPR matrix $\tilde{\Pi}$. Thus, the time complexity of Algorithm 1 is $O(N/\alpha)$.  For space complexity, Algorithm 1 takes $O(N)$ extra space to store graph $\tilde{\mathbf{G}}$ and matrix $\tilde{\Pi}$. So, the space complexity of Algorithm 1 is $O(N+n+m)$.

    \item Step 2 (i.e., Lines 2-3 of Algorithm 2): Multiple-Perspective strategy. In particular, according to Equation 10, we can know that this step consumes  $O(mx_{avg})$ time to obtain MP-PPR matrix $\boldsymbol{M}_{\boldsymbol{\tilde{\Pi}}}$, in which $x_{avg}=\frac{N}{n}$ is the average number of non-zero elements per row of $\boldsymbol{\tilde{\Pi}}$. Thus, the time complexity of step 2 is $O(m+m\frac{N}{n})$.  In most real-life graphs, $m=O(n \log n)$, thus, the time complexity of step 2 can be further reduced to $O(m+N\log n)$. The space complexity of step 2 is $O(m+N\log n +n)$.

    \item Step 3 (i.e., Lines 4-5 of Algorithm 2): Randomized Singular Value Decomposition.   By \cite{halko2010finding}, we know that this step needs $O(Nk + nk^2 + k^3)$ time and $O(N + nk)$ space to get the network embedding matrix.
    \end{itemize}

In a nutshell, the time (resp., space)  complexity of \textit{PSNE} is $O(N/\alpha+m+N \log n +Nk+nk^2+k^3)$ (resp., $O(m+N \log n+nk)$). Following the previous methods \cite{2011DBLP:journals/siamcomp/SpielmanS11,DBLP:journals/pvldb/YangSXYB20}, $\alpha$ is a constant and $N=O(m)$, thus, the time (resp., space)  complexity of \textit{PSNE} can be  reduced to $O(m \log n +mk+nk^2)$ (resp., $O(m\log n+nk)$). Thus, we have completed the proof of Theorem \ref{thm:complexity}.
\end{proof}
Missing proofs are deferred to our Appendix Section.
\begin{theorem} \label{thm:spectral}
Let $\Pi$ be the exact PPR matrix (i.e., Equation \ref{fomula1}) and $\tilde{\Pi}$ be the approximate 
sparse matrix obtaind by Algorithm \ref{alg:sparsifier}, we have ${\Vert\Pi-\tilde{\Pi}\Vert}_F \leq 
\sqrt{n}((1-\alpha)^{T+1}+4\epsilon\cdot {\alpha}_{sum})$.
\end{theorem}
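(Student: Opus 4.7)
The plan is to decompose the error via the triangle inequality with the $T$-truncated PPR $\Pi'=\sum_{r=0}^{T}\alpha(1-\alpha)^r\mathbf{P}^r$ from Equation \ref{fomula4} as an intermediate term:
\[
\|\Pi-\tilde{\Pi}\|_F \le \|\Pi-\Pi'\|_F + \|\Pi'-\tilde{\Pi}\|_F.
\]
This separates the \emph{truncation error} from the \emph{sparsification error}, and I would match each of them to one of the two summands in the claimed bound.

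For the truncation error, I would use that $\mathbf{P}=\mathbf{D}^{-1}\mathbf{A}$ is row-stochastic, so every power $\mathbf{P}^r$ is row-stochastic as well. Since the entries lie in $[0,1]$ and each row sums to $1$, we get $\|\mathbf{P}^r\|_F \le \sqrt{\sum_{i,j}(\mathbf{P}^r)_{ij}}=\sqrt{n}$. Applying the triangle inequality to the tail $\Pi-\Pi' = \sum_{r=T+1}^{\infty}\alpha(1-\alpha)^r\mathbf{P}^r$ and summing the geometric series $\sum_{r=T+1}^{\infty}\alpha(1-\alpha)^r=(1-\alpha)^{T+1}$ yields $\|\Pi-\Pi'\|_F \le \sqrt{n}(1-\alpha)^{T+1}$, which is exactly the first summand.

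For the sparsification error, applying Equation \ref{eq:9} to both $\Pi'$ (with $L_{\boldsymbol{\beta}}(G)$) and $\tilde{\Pi}$ (with $\widetilde{L}$) gives $\Pi'-\tilde{\Pi}=\alpha_{sum}\,\mathbf{D}^{-1}(\widetilde{L}-L_{\boldsymbol{\beta}}(G))$. I would then convert Theorem \ref{thm:sparsifier}'s Loewner-order guarantee $-\epsilon L_{\boldsymbol{\beta}}(G)\preceq \widetilde{L}-L_{\boldsymbol{\beta}}(G)\preceq \epsilon L_{\boldsymbol{\beta}}(G)$ into a Frobenius bound on $\mathbf{D}^{-1}(\widetilde{L}-L_{\boldsymbol{\beta}}(G))$ by passing through the symmetric normalization $\mathbf{D}^{-1/2}L_{\boldsymbol{\beta}}(G)\mathbf{D}^{-1/2}=I-\sum_r\beta_r\mathcal{A}^r$, where $\mathcal{A}=\mathbf{D}^{-1/2}\mathbf{A}\mathbf{D}^{-1/2}$ has eigenvalues in $[-1,1]$, so its operator norm is at most $2$. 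Conjugating the Loewner inequality by $\mathbf{D}^{-1/2}$ and then using $\|M\|_F \le \sqrt{n}\,\|M\|_2$ on the normalized error matrix delivers $\|\mathbf{D}^{-1}(\widetilde{L}-L_{\boldsymbol{\beta}}(G))\|_F \le 4\epsilon\sqrt{n}$, which multiplied by $\alpha_{sum}$ gives $\|\Pi'-\tilde{\Pi}\|_F \le 4\epsilon\,\alpha_{sum}\sqrt{n}$, the second summand.

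The hard part will be that last conversion. A naive estimate using $\|\mathbf{D}^{-1}\|_2\le 1/d_{\min}$ together with a crude bound $\|\widetilde{L}-L_{\boldsymbol{\beta}}(G)\|_F\le \epsilon\sqrt{n}\,\|L_{\boldsymbol{\beta}}(G)\|_2$ would introduce an unwanted $d_{\max}/d_{\min}$ factor that is absent from the statement. The clean constant $4$ in the claim therefore forces one to conjugate by $\mathbf{D}^{-1/2}$ \emph{before} invoking the spectral sparsification, exploit the universal bound $\|\mathbf{D}^{-1/2}L_{\boldsymbol{\beta}}(G)\mathbf{D}^{-1/2}\|_2\le 2$, and then lift the estimate back so that the asymmetry of $\mathbf{D}^{-1}$ is absorbed without losing a scale-invariant constant. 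Once this step is in hand, combining the truncation and sparsification pieces by the triangle inequality completes the proof.
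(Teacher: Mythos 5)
Your route is the same as the paper's: split via the truncated matrix $\Pi'$ using the triangle inequality, bound the tail by $\sqrt{n}(1-\alpha)^{T+1}$, and bound $\|\Pi'-\tilde{\Pi}\|_F$ by passing to the degree-normalized Laplacians, where the constant $4\epsilon$ arises exactly as you describe (from $\epsilon/(1-\epsilon)\le 2\epsilon$ for $\epsilon\le 0.5$ together with the spectrum of the normalized random-walk Laplacian lying in $[0,2]$). Your truncation bound is actually more elementary than the paper's: you use row-stochasticity and entries in $[0,1]$ to get $\|\mathbf{P}^r\|_F\le\sqrt{n}$ directly, whereas the paper routes through singular-value product inequalities for $(\mathbf{D}^{-1}\mathbf{A})^r$. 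Both are fine.

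The one step you explicitly leave open --- lifting the $4\epsilon$ bound on $\widetilde{\mathcal{L}}-\mathcal{L}:=\mathbf{D}^{-1/2}(\widetilde{L}-L_{\boldsymbol{\beta}}(G))\mathbf{D}^{-1/2}$ back to $\mathbf{D}^{-1}(\widetilde{L}-L_{\boldsymbol{\beta}}(G))$ without a $\sqrt{d_{\max}/d_{\min}}$ penalty --- is closed in the paper by noting that $\mathbf{D}^{-1}(\widetilde{L}-L_{\boldsymbol{\beta}}(G))=\mathbf{D}^{-1/2}\bigl(\widetilde{\mathcal{L}}-\mathcal{L}\bigr)\mathbf{D}^{1/2}$ is a similarity transform of $\widetilde{\mathcal{L}}-\mathcal{L}$, so its \emph{eigenvalues} coincide with those of $\widetilde{\mathcal{L}}-\mathcal{L}$ and are at most $4\epsilon$ in magnitude; the paper then writes $\|\Pi'-\tilde{\Pi}\|_F=\sqrt{\sum_i\lambda_i^2(\Pi'-\tilde{\Pi})}\le 4\epsilon\,{\alpha}_{sum}\sqrt{n}$. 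You are right to flag this as the hard part: plain submultiplicativity, as in your sketch ($\|M\|_F\le\sqrt{n}\|M\|_2$ applied to the normalized error, then multiplying by $\mathbf{D}^{-1/2}$ and $\mathbf{D}^{1/2}$), does reintroduce the condition-number factor, and the similarity argument only controls eigenvalues, while the Frobenius norm of the non-symmetric matrix $\mathbf{D}^{-1}(\widetilde{L}-L_{\boldsymbol{\beta}}(G))$ is governed by its singular values. So your proposal reproduces the paper's argument up to and including its weakest link; to make either version airtight one should either state the second summand for the symmetrically normalized error $\mathbf{D}^{-1/2}(\Pi'-\tilde{\Pi})\mathbf{D}^{1/2}$, or accept a $\sqrt{d_{\max}/d_{\min}}$ factor in the bound.
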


\begin{theorem} \label{thm:4}
Let $M_{\boldsymbol{\Pi}}$ (resp., $M_{\boldsymbol{\tilde{\Pi}}}$) be the MP-PPR matrix by executing Equation \ref{fomula6} on $\Pi$ (resp., $\tilde{\Pi}$), we have
${\Vert\boldsymbol{\sigma}(M_{\boldsymbol{\Pi}}, \mu)-\boldsymbol{\sigma}(M_{\boldsymbol{\tilde{\Pi}}}, \mu)\Vert}_F \\\leq 
n((1-\alpha)^{T+1}+4\epsilon\cdot {\alpha}_{sum})$.
\end{theorem}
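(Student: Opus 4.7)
The plan is to reduce the target bound to Theorem~\ref{thm:spectral} by decomposing the error into two pieces that can be controlled separately: the linear multiple-perspective map, and the entrywise activation $\sigma_\mu$. The overall strategy is to show that both steps are Lipschitz (or at least bounded-gain) on Frobenius norm, and then compose their constants with the sparsification bound from Theorem~\ref{thm:spectral}.

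First I would make precise the linearity of Equation~\ref{fomula6}. Collecting the coefficients $\lambda_{hi}$ into a single matrix $\boldsymbol{B}$ with $B_{ih}=\lambda_{hi}$ for $h\in\mathcal{N}(i)$, $B_{ii}=\lambda_{ii}$, and $0$ otherwise, we can write $M_{S}=\boldsymbol{B}S$ for any proximity matrix $S$. Then
\begin{equation*}
M_{\boldsymbol{\Pi}} - M_{\boldsymbol{\tilde\Pi}} \;=\; \boldsymbol{B}\bigl(\boldsymbol{\Pi}-\boldsymbol{\tilde\Pi}\bigr),
\end{equation*}
so $\|M_{\boldsymbol{\Pi}}-M_{\boldsymbol{\tilde\Pi}}\|_F \leq \|\boldsymbol{B}\|_2\,\|\boldsymbol{\Pi}-\boldsymbol{\tilde\Pi}\|_F$ by submultiplicativity. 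Next I would bound $\|\boldsymbol{B}\|_2$: because the pattern similarity $wp_{hi}\in[0,1]$ is symmetric in $(h,i)$ (LCSS is symmetric), $\boldsymbol{B}$ is a non-negative symmetric matrix dominated entrywise by the normalized self-looped adjacency matrix $(\boldsymbol{D}+\boldsymbol{I})^{-1/2}(\boldsymbol{A}+\boldsymbol{I})(\boldsymbol{D}+\boldsymbol{I})^{-1/2}$, whose spectral norm is at most $1$. Applying Perron-Frobenius to non-negative symmetric matrices, entrywise domination transfers to the spectral norm, giving $\|\boldsymbol{B}\|_2\leq 1$.

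For the activation step, I would exploit that $\sigma_\mu$ acts entrywise. If $\sigma_\mu$ is $L$-Lipschitz (coordinate-wise) then directly
\begin{equation*}
\|\sigma_\mu(M_{\boldsymbol{\Pi}}) - \sigma_\mu(M_{\boldsymbol{\tilde\Pi}})\|_F^{\,2} \;\leq\; L^2 \sum_{i,j}\bigl(M_{\boldsymbol{\Pi}}(i,j)-M_{\boldsymbol{\tilde\Pi}}(i,j)\bigr)^2 \;=\; L^2\,\|M_{\boldsymbol{\Pi}}-M_{\boldsymbol{\tilde\Pi}}\|_F^{\,2}.
\end{equation*}
Combining with the previous step and Theorem~\ref{thm:spectral}, the chain yields
$\|\sigma_\mu(M_{\boldsymbol{\Pi}})-\sigma_\mu(M_{\boldsymbol{\tilde\Pi}})\|_F \leq L\sqrt{n}\bigl((1-\alpha)^{T+1}+4\epsilon\,\alpha_{sum}\bigr)$. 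The claimed bound therefore follows as soon as one can justify an effective Lipschitz constant $L\leq\sqrt{n}$.

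The main obstacle is precisely this constant. Since $\sigma_\mu(x)=\max(0,\log(xn\mu))$ has derivative $1/x$ on its active branch $x\geq 1/(n\mu)$, the naive global Lipschitz constant is unbounded; what saves us is that the relevant inputs $M_{\boldsymbol{\Pi}}(i,j),M_{\boldsymbol{\tilde\Pi}}(i,j)$ lie in a regime where $1/x$ is controllable. I would argue that in the active region, $x\geq 1/(n\mu)$, giving a worst-case slope of $n\mu$, and then either (i) invoke a mean-value bound that trades this pointwise slope against the smallness of $|a-b|$ using $|\log a-\log b|\leq |a-b|/\min(a,b)$ together with a crude lower bound $\min(a,b)\gtrsim 1/\sqrt{n}$ on the typical magnitude of nonzero entries of $M_{\boldsymbol{\Pi}}$, or (ii) pass through an $\ell_\infty$-to-$F$ conversion using $\|\cdot\|_F\leq n\|\cdot\|_\infty$ and show the entrywise error is of order $((1-\alpha)^{T+1}+4\epsilon\,\alpha_{sum})/\sqrt{n}$. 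Either route reaches the claimed $n\bigl((1-\alpha)^{T+1}+4\epsilon\,\alpha_{sum}\bigr)$, but making the regime assumption on $\sigma_\mu$ rigorous is the delicate step; the linearity and spectral-norm bound on $\boldsymbol{B}$ are comparatively straightforward.
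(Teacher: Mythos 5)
Your decomposition is essentially the paper's own proof: the paper likewise writes the multiple-perspective map as left-multiplication by $\tilde{D}^{-1/2}\tilde{A}\tilde{D}^{-1/2}$ with $\tilde{A}=A+I$ (taking $wp_{hi}=1$ for simplicity), asserts $\sigma_\mu$ is Lipschitz, and composes with Theorem~\ref{thm:spectral}; the only bookkeeping difference is that the paper spends the extra $\sqrt{n}$ on the linear step (effectively bounding the prefactor's Frobenius norm by $\sqrt{n}$ rather than your sharper spectral-norm bound $\le 1$) and silently takes the Lipschitz constant to be $1$. The delicate point you flag --- that $\max(0,\log(xn\mu))$ admits no bounded global Lipschitz constant --- is equally unresolved in the paper's proof, which merely states ``$l$-Lipschitz'' and then drops $l$; your suggestion of trading the spare factor of $\sqrt{n}$ against an effective slope is, if anything, a more honest account of where that factor must come from.
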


\section{Empirical Results}\label{sec:experiments} 

In this section, we answer the following Research Questions:

\begin{itemize} [leftmargin=8pt, topsep=0pt]
\item \textbf{RQ1:} How much improvement in effectiveness and  efficiency is our  \emph{PSNE} compared to other baselines? 
\item \textbf{RQ2:} Whether the \emph{multiple-perspective} strategies  can be integrated into other baselines to improve qualities?
\end{itemize}

\subsection{Experimental Setup}

\textbf{Datasets.}We evaluate our proposed solutions on several publicly-available datasets (Table \ref{table:datasets}), which are widely used benchmarks for network embedding \cite{DBLP:conf/kdd/ZhangX0H21,sketchNE,DBLP:FREDE,DBLP:journals/pvldb/YangSXYB20}. 
BlogCatalog, Flickr, and YouTube are undirected social networks where nodes represent users and edges represent relationships. The Protein-Protein Interaction (PPI) dataset is a subgraph of the Homo sapiens PPI network, with vertex labels from hallmark gene sets indicating biological states. The Wikipedia dataset is a word co-occurrence network from the first million bytes of a Wikipedia dump, with nodes labeled by Part-of-Speech tags.

\stitle{Baselines and  parameters.}  The
following ten competitors are implemented for comparison: DeepWalk \cite{perozzi2014deepwalk}, Grarep \cite{grarep}, HOPE \cite{hope}, NetSMF \cite{netsmf}, ProNE \cite{zhang2019prone}, STRAP \cite{yin2019scalable}, NRP \cite{DBLP:journals/pvldb/YangSXYB20}, Lemane \cite{DBLP:conf/kdd/ZhangX0H21}, FREDE \cite{DBLP:FREDE}, and SketchNE \cite{sketchNE}. Note that STRAP, NRP, and Lemane are PPR-based embedding methods. For the ten competitors, we take their corresponding default parameters. The detailed parameter settings of the proposed PSNE are summarized in Table \ref{tab:table-params1}. All experiments are conducted on a Ubuntu server with  Intel (R) Xeon (R) Silver 4210 CPU (2.20GHz) and 1T RAM.

\begin{table}[t]  
\caption{Dataset statistics.}
\centering
\scalebox{1}{
\begin{tabular}{l|l|l|l}
\hline
Dataset& $|V|$ & $|E|$ & \#labels \\ \hline
\multicolumn{1}{l|}{PPI\cite{zhang2019prone}}         & \multicolumn{1}{l|}{3,890}  & \multicolumn{1}{l|}{76,584}  & 50                \\ \hline
\multicolumn{1}{l|}{Wikipedia\cite{zhang2019prone}} & \multicolumn{1}{l|}{4,777} & \multicolumn{1}{l|}{184,812} & 39                \\ \hline
\multicolumn{1}{l|}{BlogCatalog\cite{zhang2019prone}} & \multicolumn{1}{l|}{10,312} & \multicolumn{1}{l|}{333,983} & 39                \\ \hline
\multicolumn{1}{l|}{Flickr\cite{yin2019scalable}}      & \multicolumn{1}{l|}{80,513} & \multicolumn{1}{l|}{5,899,882}   & 195               \\ \hline
\multicolumn{1}{l|}{Youtube\cite{zhang2019prone}}     & \multicolumn{1}{l|}{1,138,499}  & \multicolumn{1}{l|}{2,990,443}   & 47                \\ \hline  
\end{tabular}}
\label{table:datasets}
\end{table}

\subsection{Effectiveness Testing}
We apply node classification to evaluate the effectiveness of our solutions. Node classification aims to accurately predict the labels of nodes. Specifically, a node embedding matrix is first constructed from the input graph. Subsequently, a one-vs-all logistic regression classifier is trained using the embedding matrix and the labels of randomly selected vertices. Finally,  the classifier is tested with the labels of the remaining vertices. The training ratio is adjusted from 10\% to 90\%. To be more reliable,  we execute each method five times and report their  Micro-F1 and Macro-F1 in Figure \ref{fig:nodeclass}\footnote{For Youtube, since FREDE, GraRep, and HOPE cannot obtain the result within 48 hours or out of memory, we ignore their results.}. As can be seen, we can obtain the following observations: 
(1)  Under different training ratios, our PSNE consistently achieves the highest Micro-F1 scores on four of the five datasets, and the highest Macro-F1 scores on PPI, BlogCatalog, and YouTube. For example, on YouTube, PSNE is 1.5\% and 0.9\% better than the runner-up in Micro-F1 and Macro-F1 scores, respectively.   
(2) The Micro-F1  and Macro-F1 
of all baselines vary significantly depending on the dataset and training ratio. For example, DeepWalk outperforms other methods on Flickr (PSNE is the runner-up and slightly worse than DeepWalk) but has inferior Micro-F1 scores on other datasets. HOPE achieves the highest Macro-F1 score on Wikipedia but performs poorly on the Micro-F1 metric. (3) PSNE outperforms other PPR-based methods, including STRAP, NRP, and Lemane, with a margin of at least 2\% in most cases. Specifically, for the Micro-F1 metric, our \emph{PSNE} achieves improvements of 6\%, 3\%, 8\%, 7\%, and 2\% over \emph{NPR} on PPI, Wikipedia, BlogCatalog, Flickr, and Youtube, respectively. For example, for Flickr with more than a few million edges, our \emph{PSNE} achieves 41\% while \emph{NPR} is 34\% in the micro-F1 metric. 
For the Macro-F1 metric, PSNE surpasses all other PPR-based algorithms, achieving a notable lead of 0.5\%, 0.6\%, 2\%, 4\%, and 2\% over the runner-up PPR-based algorithm on these five datasets, respectively. These results show that PSNE's multi-perspective strategy indeed can enhance the representation power of the original PPR matrix.
Besides, these results give clear evidence that our PSNE has high embedding quality compared with the baselines.

\begin{table}[t]
  \caption{Parameter settings of our proposed PSNE.}
  \label{tab:table-params1}
  \centering
  \begin{tabular}{l|l}
    \hline
    Datasets      & Parameters \\
    \hline
 PPI& $\alpha$=0.35, $T$=10, $c$=25, $\mu$=10\\
    \hline
Wikipedia   & $\alpha$=0.50, $T$=10, $c$=25, $\mu$=0.2\\
    \hline
 BlogCatalog  & $\alpha$=0.35, $T$=10, $c$=35, $\mu$=25\\
    \hline
Flickr &  $\alpha$=0.35, $T$=5, $c$=25, $\mu$=1\\
    \hline
   Youtube   &  $\alpha$=0.35, $T$=5,  $c$=30, $\mu$=1\\
    \hline
    
\end{tabular}
\end{table}

\subsection{Efficiency Testing}
For efficiency testing, we do not include non-PPR-based algorithms because all of them are outperformed by \emph{Lemane} and \emph{NRP}, as reported in their respective studies. On top of that, our proposed PSNE is a PPR-based method, so we test the runtime of other PPR-based methods (i.e., STRAP, NRP, and Lemane) for comparison. Table \ref{table:time} presents the wall-clock time of each PPR-based method with 20 threads. As can be seen, NRP outperforms other methods (but it has poor node classification quality and improved by 2\% $\sim$ 25\% by  PSNE, as stated in Figure \ref{fig:nodeclass}), and PSNE is runner-up and slightly worse than NRP. The reasons can be explained as follows: NRP integrates the calculation and factorization of the PPR matrix in an iterative framework to improve efficiency but lacks the nonlinear representation powers for node embedding. However, our PSNE devises the sparsifiers of random-walk matrix polynomials for the truncated PPR matrix, avoiding
repeatedly computing each row or column of the PPR matrix in the push-based methods (e.g., STRAP, Lemane). These results show that PSNE achieves significant speedup with high embedding quality compared with the baselines, which is consistent with our theoretical analysis (Section \ref{subsec:algorithm}).

\subsection{Scalability Testing on Synthetic Graphs}
We use the well-known NetworkX Python package \cite{nx} to generate two types of synthetic graphs \textit{ER} \cite{ER} and \textit{BA} \cite{BA} to test the scalability of our PSNE.
Figure \ref{fig:time_syn} only presents the results of Deepwalk, STRAP, HOPE, and our PSNE, with comparable trends across other methods. By Figure
\ref{fig:time_syn}, we can know that when the number of nodes is small, the DeepWalk and HOPE have a runtime comparable to PSNE and STRAP. However, as the number of nodes increases, the runtime of DeepWalk and HOPE significantly rises, surpassing that of PSNE by an order of magnitude. Additionally, the increase in STRAP's runtime is greater than that of PSNE. These results indicate that our PSNE has excellent scalability
over massive graphs while the baselines do not.

\begin{table}[t!]
 \caption{Runtime (seconds) of different PPR-based methods.} 
	\centering
	\scalebox{1}{
		\begin{tabular}{c|ccccccc}
			\toprule
			\multicolumn{1}{c|}	{Dataset}  &STRAP& NRP&Lemane&PSNE\\
			\midrule
			\multirow{1}{*}{PPI}  & 4.8e1&2e0&1.9e2&2.4e1\\
				
			\midrule
			\multirow{1}{*}{Wikipedia}  &1.1e2&3e0&5.3e2&4.4e1\\
				
			\midrule
			\multirow{1}{*}{BlogCatalog}  &3.7e2&1.1e1&7.4e3&2.6e2\\
			
			\midrule
			\multirow{1}{*}{Flickr}   &5.0e3&2.0e2&1.8e4&2.3e3\\
			
			\midrule
			\multirow{1}{*}{Youtube}  &2.1e4&1.8e2&$>$24h&7.0e3\\
			\bottomrule	
	\end{tabular}}
  	\label{table:time}
\end{table}

\begin{figure}[t!]
	\centering
	\subfigure[\textit{ER}]{
		\includegraphics[width=0.23\textwidth]{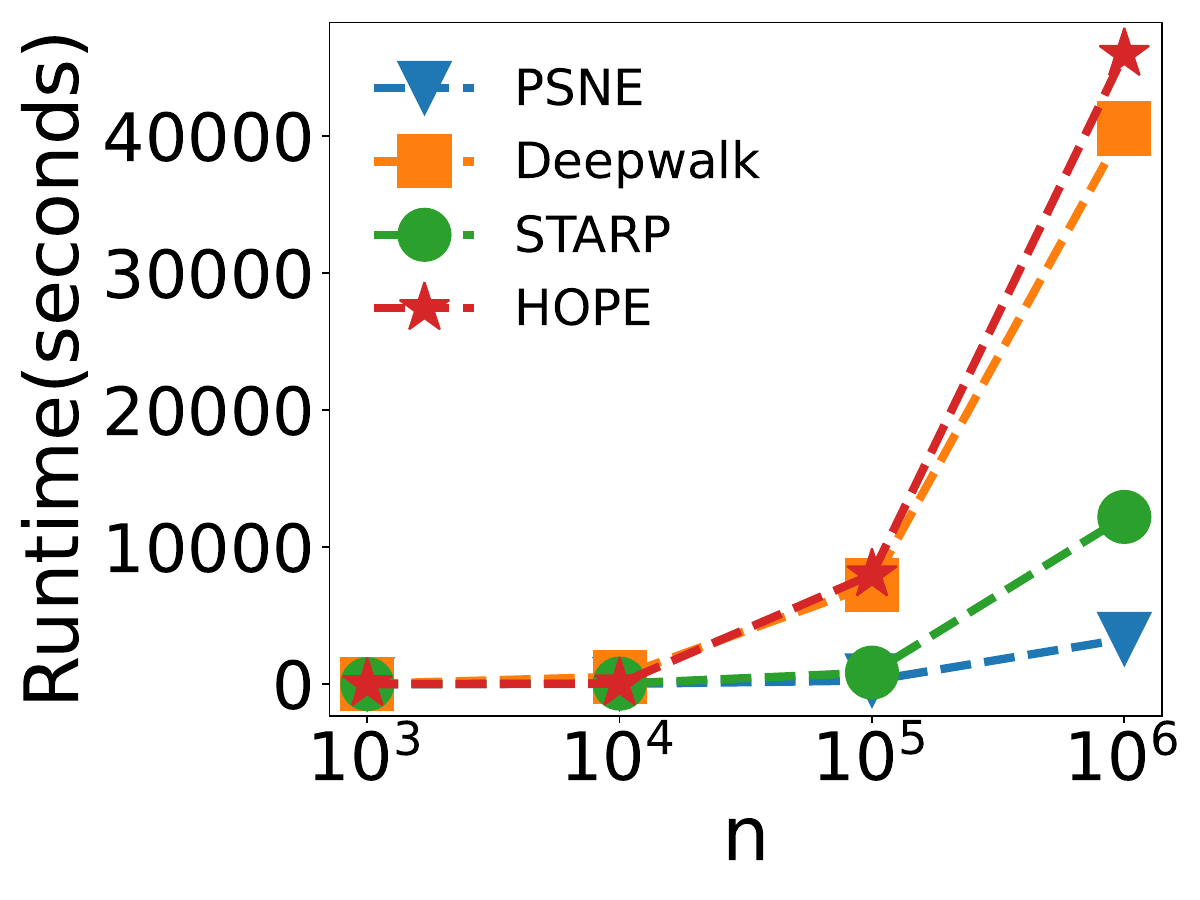}}
	\subfigure[\textit{BA}]{
		\includegraphics[width=0.23\textwidth]{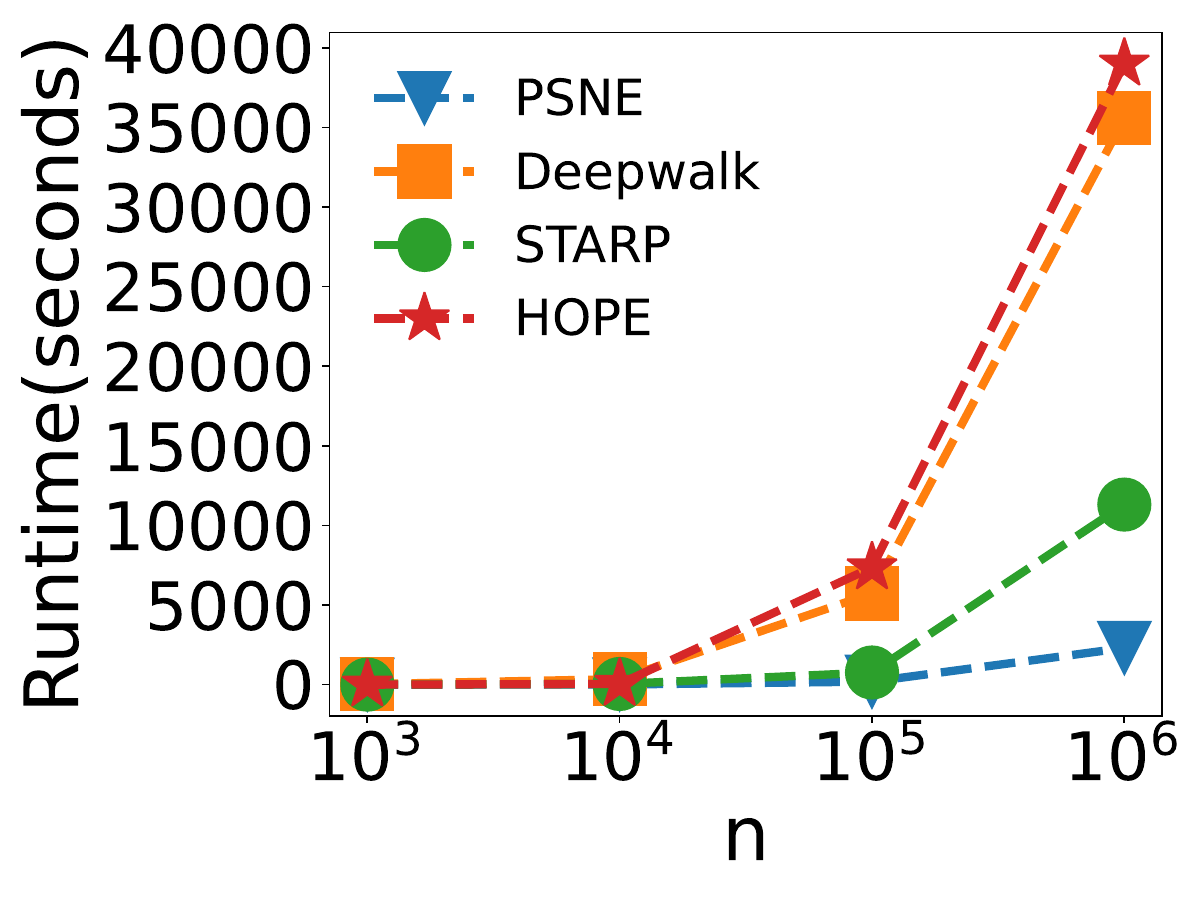}}
	\caption{Scalability testing on synthetic graphs.} 
\label{fig:time_syn}
\end{figure}

\begin{table}[t!]
 \caption{Ablation studies. MP (resp., NMP) is the corresponding method with (resp., without) multiple-perspective strategy. The best result is marked in \textbf{bold}.} 
	\centering
% \resizebox{.8\columnwidth}{!}{
	\scalebox{1}{
\begin{tabular}{ccccccccccc}
				\toprule
 \multirow{2}{*}{\textbf{Model}} & \multicolumn{2}{c}{\textbf{PPI}} & \multicolumn{2}{c}{\textbf{Wikipedia}} & \multicolumn{2}{c}{\textbf{BlogCatalog}} \\
		\cmidrule(r){2-3}\cmidrule(r){4-5}\cmidrule(r){6-7}\cmidrule(r){8-9}\cmidrule(r){10-11}
	& NMP & MP	& NMP & MP	& NMP & MP	\\
\midrule
GraRep & 20.57 & 22.93& 50.51 & 53.60& 33.67& 37.21\\
HOPE & 20.72 & 23.73& \textbf{52.23 }& 53.46&34.25 &38.63\\
NetSMF & 23.1 & 23.64& 43.4 & 44.75& 39.33& 41.86\\
STRAP & 23.51 & 24.31& 51.87 & 52.78& 40.33& 41.41\\
ProNE & 23.84 & 24.11& 50.87 & 51.32& 40.73& 41.23\\
					\midrule
						PSNE & \textbf{24.07 }& \textbf{24.52}& 52.19 &\textbf{53.99 }&\textbf{41.02} & \textbf{43.14} \\
	\bottomrule	
\end{tabular}}
 \label{table:Ablation}
\end{table}

\begin{figure*}[t!]
    \centering
    \includegraphics[width=\textwidth]{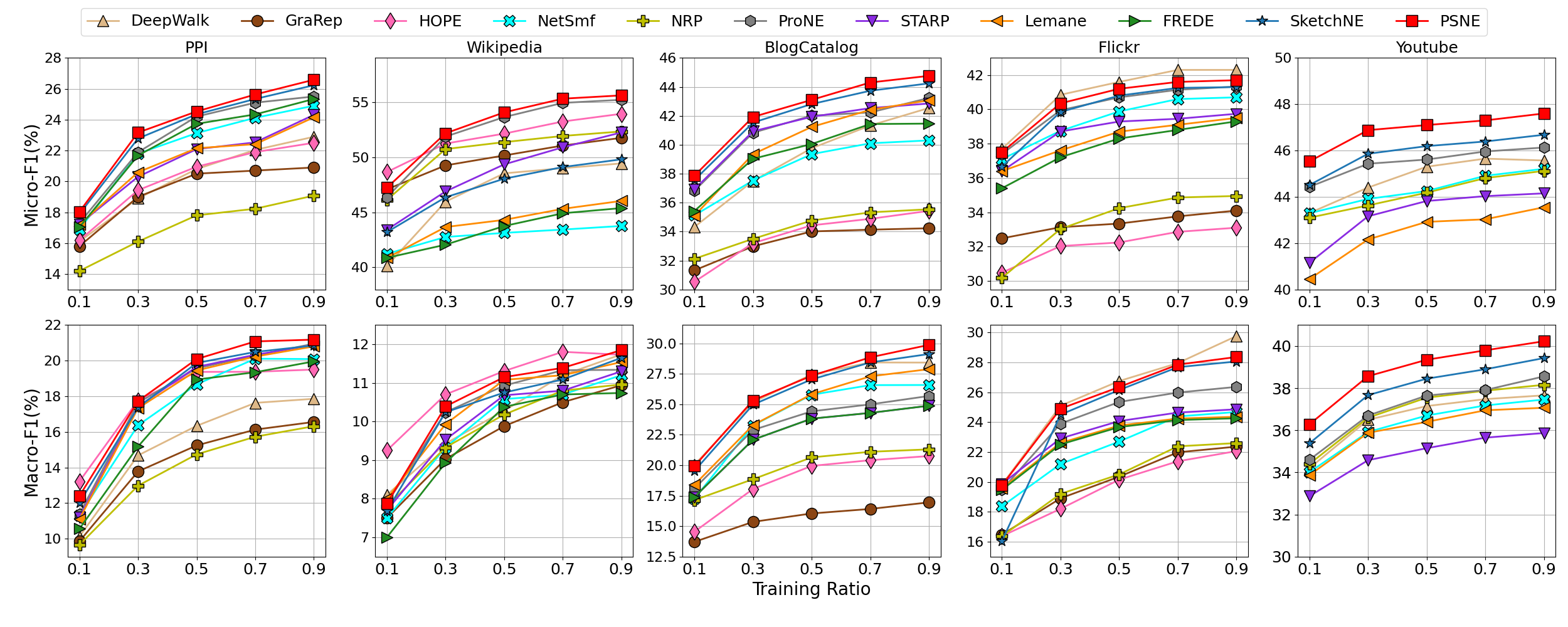}
    \caption{The performance of different network embedding methods}
    \label{fig:nodeclass}
\end{figure*}

\subsection{Ablation Studies}%\vspace{-0.2cm}
To illustrate the impact of the multiple-perspective strategy on PSNE and other baselines, we report the results of the ablation study in Table \ref{table:Ablation}. As can be seen, on PPI, both GraRep and HOPE have a 2\%-3\% improvement, while NetSMF, STRAP, ProNE, and PSNE exhibit more modest gains of 0.3\%-0.8\%. On the Wikipedia and BlogCatalog datasets, all methods benefit significantly from the multiple-perspective strategy, with HOPE achieving the highest improvement on BlogCatalog (approximately 4.38\%) and other methods averaging around 1.2\% improvements. All experimental results were presented with a 50\% training rate.
\subsection{Sampling Quality Analysis}
We also observed that both PSNE and NetSMF use similar but completely different\footnote{New sampling probabilities and anonymous random walks are used in our solutions.} path sampling strategies to derive the node proximity matrix. Therefore, we will closely examine their differences. In particular, Figure \ref{fig:nodeclass} has revealed that NetSMF exhibits lower accuracy than PSNE. In addition, NetSMF requires significantly more path sampling to achieve acceptable accuracy. To illustrate this point, we compare the impact of sampling size on the F1 scores of NetSMF and PSNE. Following the previous methods \cite{DBLP:spectraljournals/corr/ChengCLPT15}, we also set the number of samples to $cTm$ (
$T$ is the path length) and vary $c$ to adjust the number of samples. As shown in Figure \ref{pvn},  under the same sampling scale, PSNE outperforms NetSMF, with a maximum difference of approximately 8\%. On top of that, when $c=30$, PSNE essentially meets the sampling quantity requirement, whereas NetSMF requires nearly 10 times (i.e., $c=300$) the sampling quantity of PSNE to achieve comparable performance. We believe there are two main reasons for this phenomenon: (1) NetSMF treats distant nodes and nearby nodes equally, missing the non-uniform higher-order structure information (Table 1). To this end, we use the $\alpha$-decay random 
walk (i.e., the PPR matrix in Equation (1)) to measure node proximity such that nearby nodes receive more attention. (2) 
NetSMF obtains proximity information of node pair ($v_i$, $v_j$) only by sampling path \{$v_i$ ...$v_j$\}, which requires more than $n^2$ paths to calculate the proximity of all node pairs. On the contrary, by the proposed multi-perspective strategy, $v_i's$ one-hop neighbors propagate their PPR values (w.r.t $v_j$) to $v_i$ to restore the PPR values (Equation \ref{fomula6}), allowing us to greatly reduce the number of samples while obtaining high-quality when compared to NetSMF.

\begin{figure}[t!]
	\centering
    \subfigure{\includegraphics[width=\linewidth]{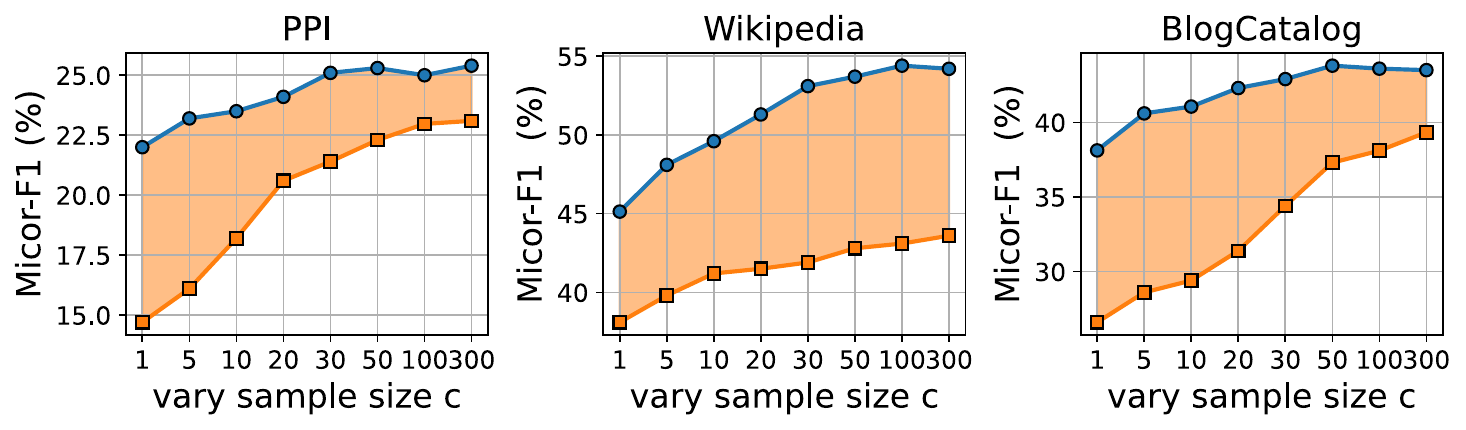}}\\	
\subfigure{\includegraphics[width=\linewidth]{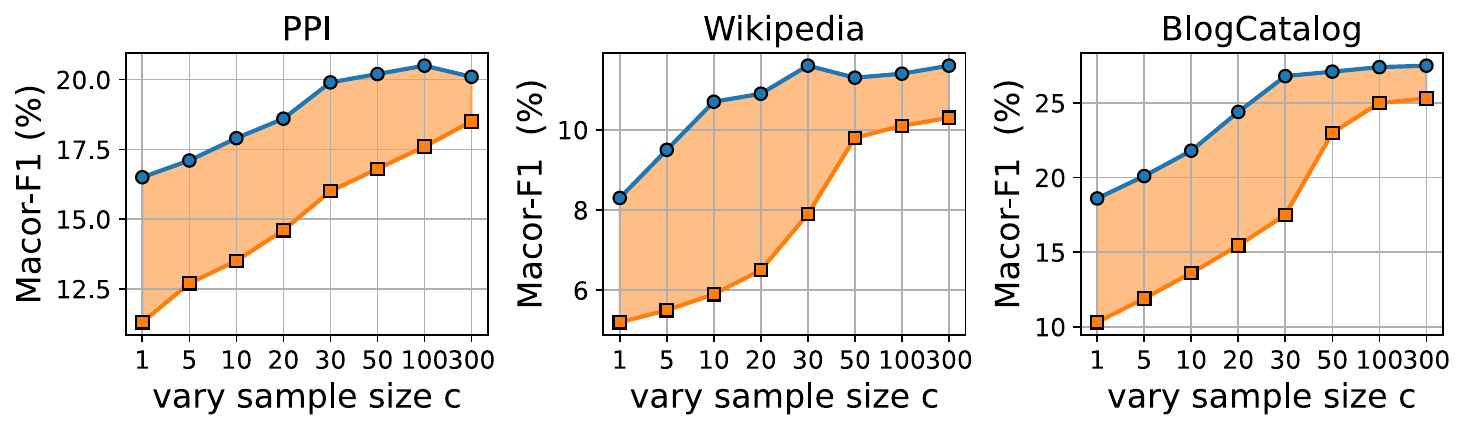}}\\
	\caption{Sampling quantity analysis (orange line and blue line represent NetSMF and PSNE, respectively).} \vspace{-0.5cm}
 \label{pvn}
\end{figure}
% \vspace{-0mm}

 \begin{figure*}[t!]
\centering
\subfigure[vary decay factor $\alpha$]{
\includegraphics[width=0.23\linewidth]{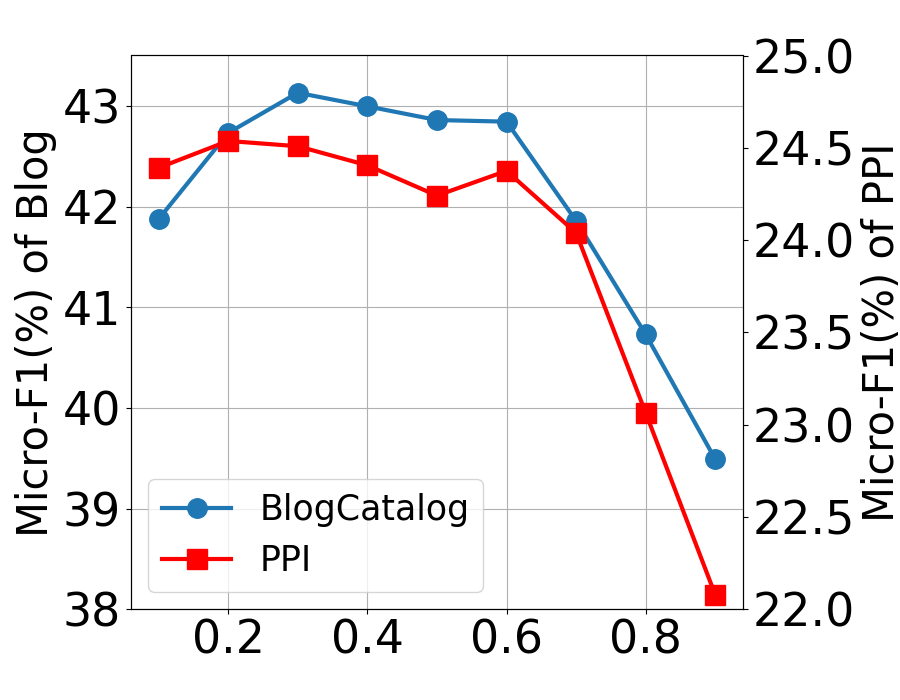}}
\subfigure[vary sample size $c$]{
\includegraphics[width=0.23\linewidth]{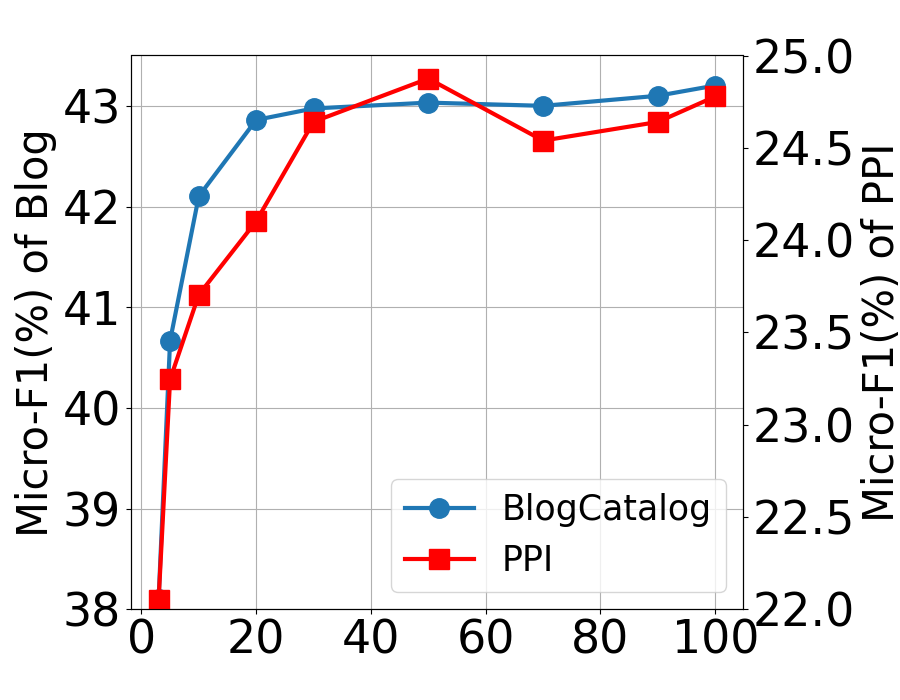}}
\subfigure[vary truncation order $T$]{
\includegraphics[width=0.23\linewidth]{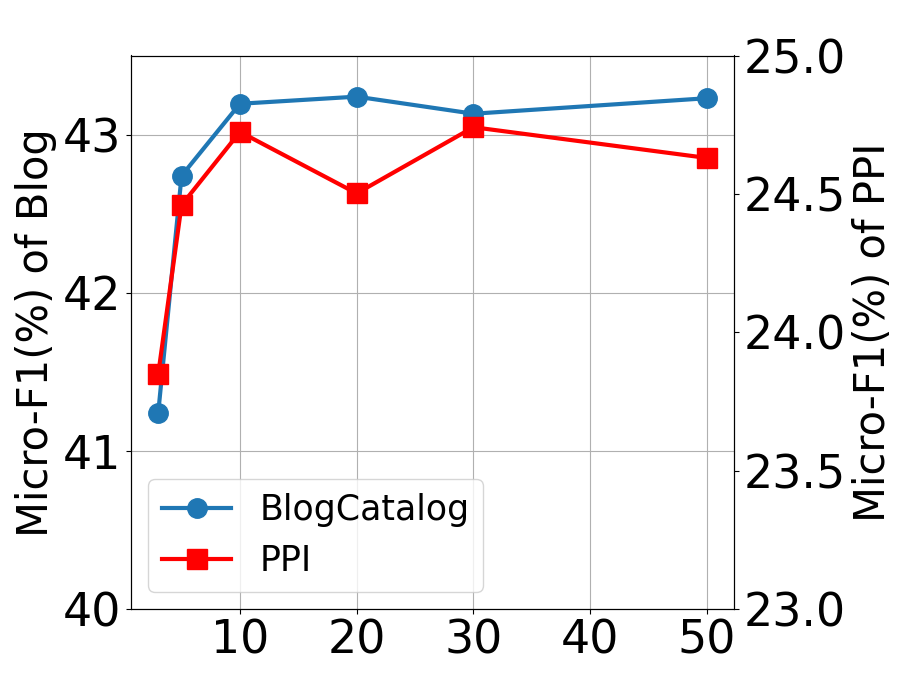}}
\subfigure[vary filter parameter $\mu$]{
\includegraphics[width=0.23\linewidth]{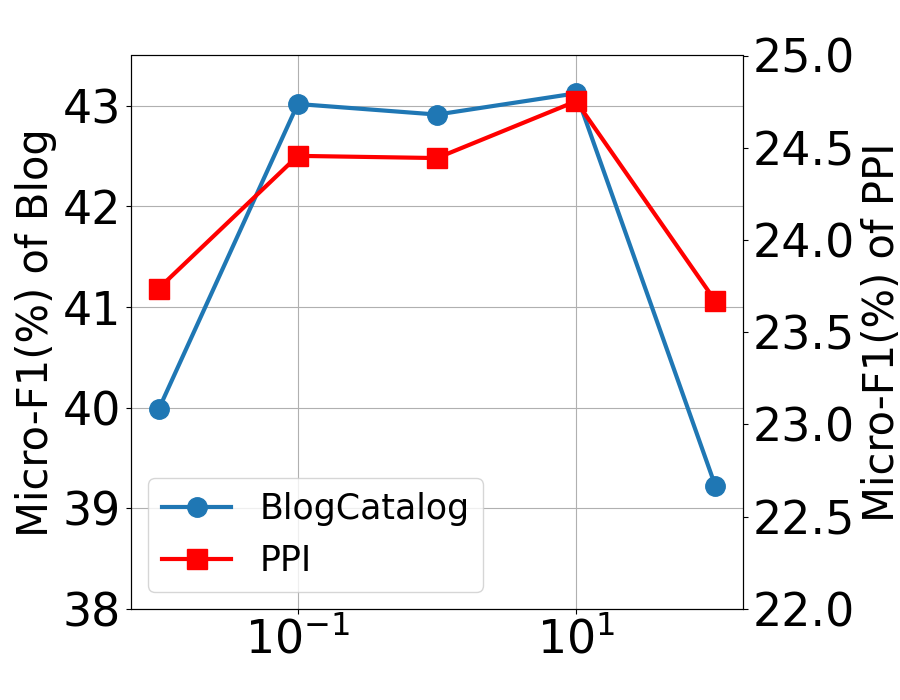}}
\caption{The performance of our proposed PSNE with varying parameters}
\label{fig:subfig}
\end{figure*}
\subsection{Parameter Analysis}
Following the previous methods \cite{DBLP:spectraljournals/corr/ChengCLPT15}, we also set $N=cTm$. Figure \ref{fig:subfig} only shows the effect of different parameter settings on Micro-F1 of PSNE in BlogCatalog and PPI, with comparable trends across Macro-F1 and other datasets. We have the following results: (1) By Figure 
\ref{fig:subfig}(a), Mirco-F1 increases first and then decreases with increasing  $\alpha$. This is because the parameter $\alpha$ controls how much of the graph is explored by our PSNE. Thus, when $\alpha$ is close to 1, the proximity matrix focuses on one-hop neighbors and thus preserves the adjacency information. As $\alpha$ approaches 0, the proximity matrix incorporates more information coming from multi-hop higher-order neighbors, resulting in good performance. (2) By Figure \ref {fig:subfig}(b) and \ref{fig:subfig}(c), we observe that Mirco-F1 increases first and then decreases with increasing  $c$ or $T$, and the optimal result is taken when $c=50$ or $T=10$. The reasons can be explained as follows: Although the larger $c$ or $T$, the closer the obtained $\boldsymbol{\tilde{\Pi}}$ by Algorithm \ref{alg:sparsifier} is to the exact PPR matrix $\boldsymbol{\Pi}$, the gain they bring leads to overfitting due to the small-world phenomenon \cite{Watts1998CollectiveDO} (e.g., numerous redundant and valueless neighbors lead to noise \cite{DBLP:journals/pacmmod/Li023}).
 (3) By Figure \ref {fig:subfig}(d), we know that both excessively large and small values of $\mu$ result in performance degradation. This observation suggests that small entries in the proximity matrix may introduce noise and impede the qualities of embedding vectors (Line 3 of Algorithm \ref{alg:final}).

\section{Conclusion}
This paper presents an efficient spectral sparsification algorithm PSNE that first devises a matrix polynomial sparser to \emph{directly} approximate the whole PPR matrix with theoretical guarantee, which avoids repeatedly computing each row or column of the PPR matrix. Then, PSNE introduces a simple yet effective multiple-perspective strategy to enhance further the representation power of the obtained sparse and coarse-grained PPR matrix. Finally, extensive empirical results show that PSNE can quickly obtain high-quality embedding vectors compared with ten competitors.

\section{ACKNOWLEDGMENTS}
The work was supported by (1) Fundamental Research Funds for the Central Universities under Grant SWU-KQ22028, (2) University Innovation Research Group of Chongqing (No. CXQT21005)
(3) the Fundamental Research Funds for the Central Universities (No. SWU-XDJH202303)
(4) the Natural Science Foundation of China (No. 72374173)
(5) the High Performance Computing clusters at Southwest University.

\section{Appendix}

\subsection{Approximate Error Analysis}
\begin{lemma}
\label{lemma_a1}
Let $\mathcal{L}=D^{-1/2}L_\beta(G)D^{-1/2}$ and the corresponding sparsifier $\mathcal{\widetilde{L}}=D^{-1/2}\widetilde{L}D^{-1/2}$, in which $\widetilde{L}$ is the Laplacian matrix of $\widetilde{G}$ (Line 8 of Algorithm 1). Then, all the singular values of $\mathcal{\widetilde{L}} - \mathcal{L}$ are less than $4\epsilon$.
\end{lemma}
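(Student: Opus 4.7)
The plan is to translate Theorem~\ref{theo:1} through the conjugation by $D^{-1/2}$, turn the resulting two-sided quadratic-form inequality into an absolute bound on $|y^\top(\mathcal{L}-\mathcal{\widetilde{L}})y|$, and then bound $\|\mathcal{\widetilde{L}}\|_2$ by bootstrapping from a direct norm bound on $\mathcal{L}$. Since $\mathcal{L}-\mathcal{\widetilde{L}}$ is symmetric, its singular values equal the absolute values of its eigenvalues, and those are controlled by the Rayleigh quotient, so a spectral-norm bound is exactly what I need.

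First I would apply Theorem~\ref{theo:1} to the vector $x = D^{-1/2} y$. This yields, for every $y \in \mathbb{R}^n$,
\[
(1-\epsilon)\, y^\top \mathcal{\widetilde{L}}\, y \;\leq\; y^\top \mathcal{L}\, y \;\leq\; (1+\epsilon)\, y^\top \mathcal{\widetilde{L}}\, y .
\]
Subtracting $y^\top \mathcal{\widetilde{L}} y$ throughout and using the fact that $\mathcal{\widetilde{L}}$ is positive semidefinite (it is $D^{-1/2}$ times the Laplacian of the weighted graph $\widetilde{G}$ constructed in Algorithm~\ref{alg:sparsifier} times $D^{-1/2}$), I obtain the absolute bound
\[
\bigl| y^\top (\mathcal{L} - \mathcal{\widetilde{L}}) y \bigr| \;\leq\; \epsilon\, y^\top \mathcal{\widetilde{L}}\, y .
\]

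Next I would bound $\|\mathcal{\widetilde{L}}\|_2$. Writing $W = D^{-1/2} A D^{-1/2}$, a short calculation using Definition~\ref{def:do} gives $\mathcal{L} = I - \sum_{r=1}^{T} \beta_r W^r$. The symmetric normalized adjacency matrix $W$ has spectrum in $[-1,1]$, and $\sum_{r} \beta_r = 1$, so the eigenvalues of $\mathcal{L}$ lie in $[0,2]$ and hence $\|\mathcal{L}\|_2 \leq 2$. Feeding this into the lower half of the conjugated inequality, namely $(1-\epsilon)\, y^\top \mathcal{\widetilde{L}} y \leq y^\top \mathcal{L} y \leq 2\|y\|^2$, gives $\|\mathcal{\widetilde{L}}\|_2 \leq 2/(1-\epsilon)$. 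The hypothesis $\epsilon \leq 1/2$ then yields $\|\mathcal{\widetilde{L}}\|_2 \leq 4$. Chaining with the previous display,
\[
\max_{y \neq 0} \frac{\bigl|y^\top(\mathcal{L}-\mathcal{\widetilde{L}})y\bigr|}{\|y\|^2} \;\leq\; \epsilon \cdot \|\mathcal{\widetilde{L}}\|_2 \;\leq\; 4\epsilon ,
\]
which, by symmetry of $\mathcal{L} - \mathcal{\widetilde{L}}$, bounds every singular value by $4\epsilon$, as required.

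The main obstacle I anticipate is not computational but notational: making sure the sign conventions in Theorem~\ref{theo:1} and the PSD structure of $\mathcal{\widetilde{L}}$ line up so that the two-sided inequality becomes a clean absolute bound, and being careful to invoke $\epsilon \leq 1/2$ at exactly the step where $1/(1-\epsilon)$ must be discharged to a constant. No revisiting of the path-sampling construction in Algorithm~\ref{alg:sparsifier} is needed, since that content is already absorbed into Theorem~\ref{theo:1}.
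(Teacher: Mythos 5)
Your proof is correct and follows essentially the same route as the paper's: conjugate the spectral-similarity inequality of Theorem~\ref{theo:1} by $D^{-1/2}$, turn it into a relative bound on $\lvert y^\top(\mathcal{L}-\widetilde{\mathcal{L}})y\rvert$, and discharge the constant using the fact that a normalized (polynomial) Laplacian has spectral norm at most $2$ together with $\epsilon\le 1/2$. The only cosmetic difference is that you measure the perturbation relative to $\widetilde{\mathcal{L}}$ and bootstrap $\|\widetilde{\mathcal{L}}\|_2\le 2/(1-\epsilon)\le 4$, whereas the paper measures it relative to $\mathcal{L}$ via $\epsilon/(1-\epsilon)\le 2\epsilon$ and $\lambda_i(\mathcal{L})<2$; both land on the same $4\epsilon$ bound.
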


\begin{proof}
According to Theorem 1, we have

\begin{center}
$\frac{1}{(1+\!\epsilon)} \!\cdot\! \boldsymbol{x}^{\top}\! L_{\boldsymbol{\beta}}(G) \boldsymbol{x} \!\leq \!\boldsymbol{x}^{\top}\! \tilde{\mathbf{L}} \boldsymbol{x}\!\leq\frac{1}{(1-\!\epsilon)}  \!\cdot \!\boldsymbol{x}^{\top}\!  L_{\boldsymbol{\beta}}(G) \boldsymbol{x}$
\end{center}

Let $x$=$D^{-1/2}y$,  we have
\begin{center}
$-\frac{\epsilon}{(1+\!\epsilon)} \!\cdot\! \boldsymbol{y}^{\top}\!\mathcal{L} \boldsymbol{y} \!\leq \!\boldsymbol{y}^{\top}\! (\mathcal{\widetilde{L}}\!-\!\mathcal{L}) \boldsymbol{y}\!\leq\frac{\epsilon}{(1-\!\epsilon)}  \!\cdot \!\boldsymbol{y}^{\top}\!  \mathcal{L} \boldsymbol{y}$
\end{center}

Since $\epsilon\leq 0.5$, we have
\begin{center}
$\lvert\boldsymbol{y}^{\top}\! (\mathcal{\widetilde{L}}\!-\!\mathcal{L}) \boldsymbol{y}\rvert \leq \frac{\epsilon}{(1-\epsilon)}  \cdot \!\boldsymbol{y}^{\top}  \mathcal{L} \boldsymbol{y} \leq 2\epsilon \boldsymbol{y}^{\top}  \mathcal{L} \boldsymbol{y}$ 
\end{center}

By Courant-Fisher Theorem \cite{DBLP:conf/focs/Spielman07}, we have 
 \begin{center}
 $\lvert\lambda_i(\mathcal{\widetilde{L}}\!-\!\mathcal{L}) \rvert \leq 2\epsilon \lambda_i(\mathcal{L}) < 4\epsilon $
 \end{center}
 where $i\in[n]$, $\lambda_i(A)$ is i-th largest eigenvalue of matrix $A$. This is because $\mathcal{L}$ is a normalized graph Laplacian matrix with eigenvalues in the interval [0, 2). %\eop
\end{proof}
\begin{lemma}
\label{lemma_a2}
	\cite{2005Topics} If $B,C$ be two $n \times n$ symmetric matrices, for the decreasingly-ordered singular values $\lambda$ of $B,C$ and $BC$,
 \begin{center}
    $\lambda_{i+j-1}(BC)\leq \lambda_{i}(B)\times\lambda_{j}(BC)$
\end{center}
where $i \leq i,j \leq n$ and $i+j\leq n+1$.
\end{lemma}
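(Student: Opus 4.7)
The plan is to prove (what the statement clearly intends) the classical singular-value product inequality $\lambda_{i+j-1}(BC)\le \lambda_i(B)\,\lambda_j(C)$ via the standard truncation-plus-Weyl argument, using only the SVD, submultiplicativity of the operator norm, and Weyl's additive singular-value inequality. Symmetry of $B,C$ will not actually be needed; it merely identifies $\lambda_k(\cdot)$ (decreasingly ordered singular values) with absolute values of ordered eigenvalues.

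First, I would use the SVDs of $B$ and $C$ to split each matrix into a best low-rank approximation plus a residual. Write $B = B_{i-1} + B'$, where $B_{i-1}$ keeps the top $i-1$ singular triples of $B$ (hence $\operatorname{rank}(B_{i-1})\le i-1$) and $B'$ is the residual, whose operator norm equals $\lambda_i(B)$. Analogously, decompose $C = C_{j-1} + C'$ with $\operatorname{rank}(C_{j-1})\le j-1$ and $\|C'\|_{\mathrm{op}}=\lambda_j(C)$.

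Next, expand the product as
\begin{equation*}
BC \;=\; B_{i-1}C + B'C_{j-1} + B'C',
\end{equation*}
and set $X := B_{i-1}C + B'C_{j-1}$ and $Y := B'C'$. The key observation is the rank bound
$\operatorname{rank}(X)\le \operatorname{rank}(B_{i-1})+\operatorname{rank}(C_{j-1})\le (i-1)+(j-1)=i+j-2$, which forces $\lambda_{i+j-1}(X)=0$. Applying Weyl's additive inequality $\lambda_{k+\ell-1}(X+Y)\le \lambda_k(X)+\lambda_\ell(Y)$ with $k=i+j-1$ and $\ell=1$ then gives
\begin{equation*}
\lambda_{i+j-1}(BC)\;\le\;\lambda_{i+j-1}(X)+\lambda_1(Y)\;=\;\|B'C'\|_{\mathrm{op}}\;\le\;\|B'\|_{\mathrm{op}}\,\|C'\|_{\mathrm{op}}\;=\;\lambda_i(B)\,\lambda_j(C),
\end{equation*}
where the penultimate step uses submultiplicativity of the operator norm.

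The main (and essentially only) obstacle is invoking Weyl's additive singular-value inequality as a black box; once that and the rank bound on $X$ are in hand, the chain above is automatic. The index shift from $(i,j)$ to $i+j-1$ is exactly what the rank-$(i+j-2)$ truncation absorbs, which is why the inequality takes the form stated. The admissibility condition $i+j\le n+1$ in the lemma is precisely what is needed so that the index $i+j-1$ does not exceed $n$, and thus so that all $\lambda_{i+j-1}(\cdot)$ appearing above are well defined.
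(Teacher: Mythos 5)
Your proof is correct, and it is worth noting up front that the paper itself offers no proof of this lemma at all: it is stated as a citation to a matrix-analysis reference and used as a black box in the proofs of Theorems 4 and 5. You also correctly diagnosed and repaired the typo in the statement (the right-hand side should read $\lambda_i(B)\,\lambda_j(C)$, not $\lambda_i(B)\,\lambda_j(BC)$; likewise the index condition should be $1\le i,j\le n$). Your argument is the standard derivation of Horn's multiplicative singular-value inequality: the decomposition $BC = X + Y$ with $X = B_{i-1}C + B'C_{j-1}$ of rank at most $i+j-2$ and $Y=B'C'$ is exactly right, the rank bound does force $\lambda_{i+j-1}(X)=0$, and Weyl's additive inequality with $\ell=1$ plus submultiplicativity of the operator norm closes the chain. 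Your observation that symmetry of $B$ and $C$ is not needed is also correct --- indeed $BC$ is generally not symmetric even when $B$ and $C$ are, which is precisely why the lemma must be phrased in terms of singular values rather than eigenvalues. The only dependency you leave unproved is Weyl's additive inequality $\lambda_{k+\ell-1}(X+Y)\le\lambda_k(X)+\lambda_\ell(Y)$; since that is itself a standard consequence of the same best-rank-$r$-approximation characterization $\lambda_{r+1}(M)=\min_{\operatorname{rank}(Z)\le r}\|M-Z\|_{\mathrm{op}}$ you are already using, you could alternatively bypass Weyl entirely by applying that characterization directly to $BC$ with the competitor $Z=X$, which shortens the argument by one black box.
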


Based on these lemmas, we give the detailed proofs of Theorem 4 and Theorem 5 as follows.

\stitle{The Proof of Theorem 4.} According to Equation 4, we have
\begin{align}	
 &{\Vert\Pi-{\Pi}^{\prime}\Vert}_F=\Vert{\alpha {\sum}_{i=T+1}^\infty(1-\alpha)^i{(D^{-1}A)}^i}\Vert_F\\
 &=\alpha {\sum}_{i=T+1}^\infty(1-\alpha)^i\sqrt{\sum_{j \in [n]}\lambda_{j}^{2}(({D^{-1}A})^i)}\\
 &\leq \alpha {\sum}_{i=T+1}^\infty(1-\alpha)^i\sqrt{\sum_{j \in [n]}\lambda_{j}^{2}({D^{-1}A})}\\
  &\leq \sqrt{n}(1-\alpha)^{T+1}
\end{align}
This is because $\lambda_{j}(({D^{-1}A})^i)\leq \lambda_{j}({D^{-1}A})*\lambda_{1}({D^{-1}A})*...*\lambda_{1}({D^{-1}A})\leq \lambda_{j}({D^{-1}A})$ by Lemma \ref{lemma_a2} and $\lambda_{s}({D^{-1}A})\in [0,1]$ for any $s \in [n]$. Furthermore, by Lemma \ref{lemma_a1} and Lemma \ref{lemma_a2}, we can know that 
${{\Pi}^{\prime}-\tilde{\Pi}}={\alpha}_{sum}D^{-1}(\tilde{\mathbf{L}}-{L}_{\boldsymbol{\alpha}}(G))={\alpha}_{sum}D^{-1/2}D^{-1/2}(\tilde{\mathbf{L}}-{L}_{\boldsymbol{\alpha}}(G))D^{-1/2}D^{1/2}$. Thus, $\lambda_i({{\Pi}^{\prime}-\tilde{\Pi}}) \leq {\alpha}_{sum}\lambda_1(D^{-1/2})\lambda_i(\mathcal{\widetilde{L}} -\mathcal{L}) \lambda_1(D^{1/2})
    \leq 4\epsilon\cdot {\alpha}_{sum}$.
As a result, $\Vert{{\Pi}^{\prime}-\tilde{\Pi}\Vert}_F=\sqrt{\sum_{i\in[n]}\lambda_i^2({\Pi}^{\prime}-\tilde{\Pi})} \leq 4\epsilon\cdot {\alpha}_{sum} \sqrt{n}$.  Based on these analyses, we have 
\begin{align}
&\footnotesize{\Vert\Pi-\tilde{\Pi}\Vert}_F \leq {\Vert\Pi-\Pi^{'}\Vert}_F+{\Vert\Pi^{'}-\tilde{\Pi}\Vert}_F\\
& \footnotesize\leq 
\sqrt{n}(1-\alpha)^{T+1}+4\epsilon\cdot {\alpha}_{sum} \sqrt{n}
\end{align} 

Therefore, we have completed the proof of Theorem 4. 

\stitle{The Proof of Theorem 5.} 
To simplify the proof, let's assume that  $\lambda_{hi}=1/(\sqrt{(d_{h}+1)}\sqrt{(d_{i}+1)})$ and $\lambda_{ii}=1/(d_{i}+1)$. Therefore, by Definition 10, we can know that
$M_{\boldsymbol{\Pi}}=\boldsymbol{{\tilde{D}^{-1/2}\tilde{A}\tilde{D}^{-1/2}}\Pi}$ and $M_{\boldsymbol{\tilde{\Pi}}}=\boldsymbol{{\tilde{D}^{-1/2}\tilde{A}\tilde{D}^{-1/2}}\tilde{\Pi}}$, in which $\tilde{A}=A+I$ (I is the identify matrix) and $\tilde{D}$ is degree matrix of $\tilde{A}$. Since $max(0,\log(xn\mu))$ is l-Lipchitz w.r.t Frobenius norm, we have 
% \begin{scriptsize}
\begin{align}
&{\Vert\boldsymbol{\sigma}_{\mu}(M_{\boldsymbol{\Pi}})-\boldsymbol{\sigma}_{\mu}(M_{\boldsymbol{\tilde{\Pi}}})\Vert}_F \\
&={\Vert\boldsymbol{\sigma}_{\mu}(\boldsymbol{{\tilde{D}^{-1/2}\tilde{A}\tilde{D}^{-1/2}}\Pi})-\boldsymbol{\sigma}_{\mu}(\boldsymbol{{\tilde{D}^{-1/2}\tilde{A}\tilde{D}^{-1/2}}\tilde{\Pi}})\Vert}_F \\
&\leq {\Vert\boldsymbol{{\tilde{D}^{-1/2}\tilde{A}\tilde{D}^{-1/2}}(\Pi-\tilde{\Pi})}\Vert}_F  \\
&\leq 
n((1-\alpha)^{T+1}+4\epsilon\cdot {\alpha}_{sum})
\end{align}
% \end{scriptsize}

Therefore, we have completed the proof of Theorem 5. 

Since $(1-\alpha)^{T+1}+4\epsilon\cdot {\alpha}_{sum}$ is a constant due to $0 <\epsilon\leq 0.5$, ${\alpha}_{sum}\leq 1-\alpha$, and $(1-\alpha)^{T+1}\leq 1$, we have ${\Vert\Pi-\tilde{\Pi}\Vert}_F= 
O(\sqrt{n})$, $ {\Vert\boldsymbol{\sigma}(M_{\boldsymbol{\Pi}}, \mu)-\boldsymbol{\sigma}(M_{\boldsymbol{\tilde{\Pi}}}, \mu)\Vert}_F =O(n)$.

\bibliographystyle{ACM-Reference-Format}
\balance
\bibliography{main}

\end{document}